\DeclarePairedDelimiter\ceil{\lceil}{\rceil}
\DeclarePairedDelimiter\floor{\lfloor}{\rfloor}
\newcommand{\defeq}{\triangleq}
\newcommand{\sbr}[1]{\left[ #1 \right]}
\newcommand{\sth}[2]{\left\{\ #1 \;\middle|\; #2 \ \right\}}
\newcommand{\bo}[1]{\mathcal{O} \left( #1 \right)}
\newcommand{\tbo}[1]{\tilde{\mathcal{O}} \left( #1 \right)}
\newcommand{\pmin}{\rho}
\newcommand{\pr}[1]{\mathbb{P} \left( #1 \right)}
\newcommand{\A}{\mathcal{A}}
\newcommand{\B}{\mathfrak{B}}
\newcommand{\e}{\mathfrak{E}}
\newcommand{\cE}[2]{\mathbb{E} \left[ #1 \;\middle\vert\; #2 \right]}
\newcommand{\I}{\mathcal{I}}
\newcommand{\T}{\mathcal{T}}
\newcommand{\Z}{\mathbb{Z}_+}
\let\Algorithm\algorithm
\renewcommand\algorithm[1][]{\Algorithm[#1]\setstretch{1.2}}
\algnewcommand\algorithmicinputs{\textbf{Inputs:}}
\algnewcommand\INPUTS{\item[\algorithmicinputs]}
\algnewcommand\algorithmicoutputs{\textbf{Outputs:}}
\algnewcommand\OUTPUTS{\item[\algorithmicoutputs]}
\algnewcommand\algorithmicinitialization{\textbf{Initialization:}}
\algnewcommand\INITIALIZATION{\item[\algorithmicinitialization]}
\newtheorem{Lemma}{Lemma}
\newtheorem{Propn}{Proposition}
\newtheorem{Thm}{Theorem}
\newtheorem{Cor}{Corollary}
\theoremstyle{definition}
\newtheorem{Aspn}{Assumption}
\theoremstyle{remark}
\title{\LARGE \bf
Nonstochastic Bandits with Infinitely Many Experts
}
\author{X. Flora Meng, Tuhin Sarkar, and Munther A. Dahleh
\thanks{This work was supported by the OCP Group.}
\thanks{X. Flora Meng, Tuhin Sarkar, and Munther A. Dahleh are with the Department of Electrical Engineering and Computer Science, Massachusetts Institute of Technology, Cambridge, MA 02139, USA
        {\tt\small xmeng@mit.edu, tuhin91@gmail.com, dahleh@mit.edu}}%
}
\begin{document}

\maketitle
\thispagestyle{empty}
\pagestyle{empty}

\begin{abstract}

We study the problem of nonstochastic bandits with expert advice, extending the setting from finitely many experts to any countably infinite set: A learner aims to maximize the total reward by taking actions sequentially based on bandit feedback while benchmarking against a set of experts. We propose a variant of Exp4.P that, for finitely many experts, enables inference of correct expert rankings while preserving the order of the regret upper bound. We then incorporate the variant into a meta-algorithm that works on infinitely many experts. We prove a high-probability upper bound of $\tilde{\mathcal{O}} \big( i^*K + \sqrt{KT} \big)$ on the regret, up to polylog factors, where $i^*$ is the unknown position of the best expert, $K$ is the number of actions, and $T$ is the time horizon. We also provide an example of structured experts and discuss how to expedite learning in such case. Our meta-learning algorithm achieves optimal regret up to polylog factors when $i^* = \tilde{\mathcal{O}} \big( \sqrt{T/K} \big)$. If a prior distribution is assumed to exist for $i^*$, the probability of optimality increases with $T$, the rate of which can be fast.

\end{abstract}

\section{INTRODUCTION}
Early work on the multi-armed bandit problem commonly studied settings where the rewards of each arm are stochastically generated from some unknown distribution~\cite{robbins1952some,lai1985asymptotically,auer2002finite}. In general, such statistical assumptions are difficult to validate or inapproriate for some applications such as packet transmission in communication networks~\cite{auer1995gambling,auer2002the}. The problem of nonstochastic bandits, first investigated in~\cite{auer1995gambling,auer2002the}, makes no statistical assumptions about how the rewards are generated.

A setting of the nonstochastic bandit problem allows for incorporating expert advice. The \emph{learner} interacts with an \emph{adversary} over a time horizon $T$ as follows. At each time, the adversary sets the rewards for $K$ actions and keeps them secret. After getting every expert's advice on the probability of choosing each action, the learner combines the advice and samples an action. Finally, the learner observes only the reward of the action chosen, and the game repeats. The learner's goal is to minimize \emph{regret}, which is the gap between the total reward gained and the expected total reward of the best expert $i^*$ who is unknown a priori.

The framework described is a general one. First, there is no assumption about the generation of rewards except that the adversary is \emph{oblivious}. In other words, the adversary's choices are independent of the learner's strategy. Equivalently, all rewards can be assigned before the game starts, and the learner only observes the rewards of chosen actions sequentially. Second, we do not restrict or assume knowledge of how the experts come up with their advice. Third, experts can give deterministic advice.

The problem of bandits with expert advice is not only a natural model for numerous real-world applications, such as selecting and pricing online advertisements~\cite{mcmahan2009tighter}, but also important from a theoretical perspective. Contextual bandits can be framed as a bandits with expert advice problem by introducing policies that map a context to a probability distribution over actions~\cite{mcmahan2009tighter,agarwal2017corralling}. Bandits with expert advice are also closely related to online model selection where experts correspond to model classes~\cite{cesa2006prediction,foster2017parameter,foster2019model}.

Prior work on nonstochastic bandits with expert advice typically assumes the number of experts to be \emph{finite}~\cite{auer1995gambling,auer2002the,mcmahan2009tighter,beygelzimer2011contextual,neu2015explore}. The \textbf{exp}onential-weight algorithm for \textbf{exp}loration and \textbf{exp}loitation using \textbf{exp}ert advice (Exp4), introduced by~\cite{auer1995gambling,auer2002the}, has a regret upper bound of $\mathcal{O} \big( \sqrt{KT\ln{N}} \big)$ \emph{in expectation}, where $N$ is the number of experts. This upper bound almost matches the lower bound $\Omega (\sqrt{(KT\ln N) / \ln K})$ derived by \cite{alekh2012contextual} for the expected regret when $\ln{N} \leq T\ln{K}$. However, Exp4 does not satisfy a similar regret guarantee \emph{with high probability} due to the large variance of its estimates. Algorithms with high-probability guarantees are preferred for domains that need reliable methods, but such algorithms require delicate analysis~\cite{beygelzimer2011contextual,neu2015explore}. The Exp4.P algorithm, a variant of Exp4 proposed by~\cite{beygelzimer2011contextual}, satisfies a regret upper bound of $\mathcal{O} \big( \sqrt{KT\ln{(N/\delta)}} \big)$ with probability at least $1-\delta$. This bound can be improved by a constant factor by avoiding explicit exploration~\cite{neu2015explore}.

We study the problem of nonstochastic bandits with infinitely many experts. Our main question is: Can the learner perform almost as well as the globally best expert $i^*$ of a countably infinite set while only querying a finite number of experts? This question is motivated by challenges encountered in practical situations where it is unfeasible to seek advice from all experts all the time~\cite{seldin2013open}. For search engine advertising, a company may need to choose among a multitude of schemes some of which also involve hyperparameter tuning~\cite{mcmahan2009tighter}. As another example, there are often a myriad of features that can be used for online recommendation systems. Some features tend to be more informative than others, but their relevance is normally unknown a priori. We can transform this problem into bandits with expert advice where each expert corresponds to a model class in a certain feature space. The number of experts can be extremely large due to the combinatorial nature. In contrast to the large number of experts available, it is desirable to query only some of them each time in consideration of computational constraints.

\emph{Our Contributions:} For the general case without any assumption about the experts, we propose an algorithm called \textbf{Be}st \textbf{E}xpert \textbf{S}earch (BEES) and provide theoretical guarantees on its performance. BEES runs a subroutine called Exp4.R in epochs, an algorithm that we obtain by modifying Exp4.P. The ``R'' denotes a feature of Exp4.R: it enables inference of correct expert rankings with high probability in addition to satisfying a regret upper bound of the same order as that proved for Exp4.P. Our main result establishes a high-probability upper bound of $\tilde{\mathcal{O}} \big( (i^*)^{1/\alpha}K + \sqrt{\alpha KT} \big)$ on the regret of BEES, hiding only polylog factors, which adapts to the index of the unknown best expert $i^*$ and depends on a positive integer-valued parameter $\alpha$. The bound illustrates the trade-off, controlled by $\alpha$, between exploration and exploitation for the problem of nonstochastic bandits with infinitely many experts. On the one hand, it is desirable to include numerous experts per epoch so as to approach $i^*$ at a fast rate. On the other hand, querying too many experts simultaneously necessitates long epochs, which reduces the rate at which more experts are included. Although tuning $\alpha$ needs the unknown index $i^*$, we can simply set $\alpha =1$. Our regret upper bound is optimal up to polylog factors when $i^* = \tilde{\mathcal{O}} \big( \sqrt{T/K} \big)$. This regime is less restricted than it seems at first sight. If we assume a prior distribution on $i^*$, then $i^* = \tilde{\mathcal{O}} \big( \sqrt{T/K} \big)$ holds with a probability that increases with $T$, the rate of which can be fast. Inspired by the problem of finite-time model selection for reinforcement learning (RL), we also present an example of structured experts, which simulates the trade-off between approximation and estimation. We discuss how the expert ranking property of Exp4.R can be used to expedite learning in such case and demonstrate the improvement in numerical experiments.

\emph{Related Work:} A natural approach is to consider experts as arms and use methods for infinitely many-armed bandits such as~\cite{berry1997bandit,kleinberg2008multi,rusmevichientong2010linearly,carpentier2015simple}. However, such work relies on statistical assumptions, whereas our setting is nonstochastic. Our question is also related to bandits with limited advice, first posed by \cite{seldin2013open} and subsequently solved by \cite{kale2014multiarmed}, but their results are restricted to finitely many experts. For the setting considered in this paper, existing work either achieves a high-probability regret bound larger than $\tilde{\mathcal{O}} \big( \sqrt{KT} \big)$ or has worse computational efficiency. When configured correctly, Exp4 has a regret upper bound of $\mathcal{O} \big( \sqrt{KT}\ln{i^*} \big)$ \emph{in expectation} \cite{foster2019model}. However, the algorithm is computationally unfeasible as it needs to handle infinitely many experts at every time step. One method of making Exp4 computationally tractable is to truncate the sequence of experts to a subset of size $\mathcal{O} \big( e^{\sqrt{KT}} \big)$ as any larger set would make the expected regret superlinear in $K$ or $T$. Running Exp4 with correct configurations on this subset of experts has a regret upper bound of $\mathcal{O} \big( (KT)^{3/4} + T\Delta \big)$ \emph{in expectation} where $\Delta$ is the infimum upper bound on the suboptimality gaps of the experts considered. For \emph{stochastic} contextual bandits, Exp4.P can be used as a subroutine to achieve a high-probability regret bound of $\tilde{\mathcal{O}} \big( \sqrt{dT\ln{T}} \big)$ with an infinite set of experts that has a finite Vapnik--Chervonenkis dimension $d$~\cite{beygelzimer2011contextual}. Since the regret analysis of Exp4.P relies on the union bound, the algorithm does not apply to infinitely many experts in the nonstochastic setting. If we run Exp4.P on a finite subset of experts of size $\Theta \big( \delta \exp \big( \sqrt{T/(16K)} \big) \big)$, the regret is then bounded from above by $\mathcal{O} \big( K^{1/4}T^{3/4} + T\Delta \big)$ with probability at least $1-\delta$. Running Exp4.P on a subset of $T$ experts attains a high-probability regret bound of $\tilde{\mathcal{O}} \big( \sqrt{KT} \big)$ when $i^* \leq T$. Although the worst-case regret guarantee is the same order as that provided by our algorithm for sufficiently small $i^*$, considering a subset of experts that is fixed in advance can lead to worse performance in practice than growing the subset adaptively, as shown in our numerical experiments. Moreover, the truncation method requires knowing $T$ a priori, which is not necessary for BEES. Since the computational complexity of Exp4.P is linear in the number of experts for both space and runtime, running Exp4.P on $T$ experts becomes computationally intensive for large $T$.

\emph{Outline:} Section~\ref{sec:problem_formulation} formally defines the problem of nonstochastic bandits with infinitely many experts. In Section~\ref{sec:finite_experts}, we introduce Exp4.R for the setting of finitely many experts and prove that it enables inference of correct expert rankings with high probability. Section~\ref{sec:infinite_experts} investigates the case of infinitely many experts and presents a meta-algorithm that runs Exp4.R as a subroutine. We prove a high-probability regret upper bound and give an example to illustrate how to expedite learning when working with structured experts. Section~\ref{sec:experiments} presents simulation results that complement our theoretical findings. Finally, we conclude in Section~\ref{sec:conclusions}.

\section{PROBLEM FORMULATION}\label{sec:problem_formulation}
Let $\Z$ be the set of strictly positive integers. For $N \in \Z$, we define $[N] \defeq \{1,2, \dots , N\}$. Let $T \in \Z$ be the \emph{time horizon}. Let $\A$ be a set of \emph{actions} where $| \A | =K < \infty$.

At each time $t \in [T]$, the adversary first sets a reward vector $r(t) \in [0,1]^K$ where $r_a(t)$ is the \emph{reward} of action $a$. Each expert $i \in \Z$ then gives their \emph{advice} $\xi^i(t)$, which is a probability vector over $\A$. After querying a finite subset of the experts' advice but not the rewards, the learner then samples an action $a(t)$. Finally, the learner receives the reward $r_{a(t)}(t)$ and no other information. The game proceeds to time $t+1$ and finishes after $T$ time steps. The learner's goal is to combine the experts' advice such that the total reward is close to a benchmark, which we will define shortly.

Let $y_i(t) \defeq \sum_{a \in \A} \xi^i_a(t) r_a(t)$ be the expected reward of expert $i$ at time $t$. For any time interval $\T \subset \Z$ such that $| \T | < \infty$, we denote the expected total reward of expert $i$ during $\T$ as $R_i(\T) \defeq \sum_{t \in \T} y_i(t)$. We define the best expert $i^*(\I;\T)$ of a subset $\I \subseteq \Z$ during $\T$ as the one with the lowest index that has the highest total reward in expectation,\footnote{If $\max_{i \in \I} R_i(\T)$ does not exist, we define $i^*(\I;\T) = \infty$ and $R_{i^*(\I;\T)}(\T) = \sup_{i \in \I} R_i(\T)$.} namely, $i^*(\I;\T) \defeq \min \left\{ \operatorname{argmax}_{i \in \I} R_i(\T) \right\}$. The learner's \emph{regret} with respect to $i^*(\I;\T)$ is
\begin{equation*}
	\text{Regret}(\T; \I) \defeq R_{i^*(\I;\T)}(\T) - \sum_{t \in \T} r_{a(t)}(t).
\end{equation*}
For simplicity of notation, let $\text{Regret}(T) \defeq \text{Regret}([T]; \Z)$ and $i^* \defeq i^*(\Z;[T])$. The learner's goal is to minimize $\text{Regret}(T)$, the regret with respect to the \emph{globally best expert} $i^*$ for the time horizon considered.

\section{NONSTOCHASTIC BANDITS WITH A FINITE NUMBER OF EXPERTS}\label{sec:finite_experts}
We start with a simplified problem where the number of experts is finite. Section~\ref{sec:exp4r_algorithm} presents Exp4.R (Algorithm~\ref{alg:exp4.r}) and provides some intuition for its design. In Section~\ref{sec:exp4r_properties}, we show that Exp4.R not only preserves the regret upper bound of Exp4.P in terms of order but also enables inference of correct expert rankings with high probability.

\subsection{The Algorithm}\label{sec:exp4r_algorithm}
Exp4.R (Algorithm~\ref{alg:exp4.r}) is a slight variant of Exp4.P proposed by~\cite{beygelzimer2011contextual}. The major distinction is that Exp4.R calculates a threshold vector $\epsilon$ which enables inference of correct expert rankings with high probability. Exp4.R takes four inputs, namely, an \emph{error rate} $\delta \in (0,1]$, a time horizon $T \in \Z$, the minimum probability $\pmin \in (0, 1/K]$ of exploration, and a finite set of experts $\I \subset \Z$. Without loss of generality, we suppose that $| \I | =N$.

Exp4.R first initializes a weight $w_i(1)=1$ for each expert $i \in \I$. At time $t \in [T]$, normalizing $w(t)$ gives a probability distribution $q(t)$ over $\I$. After getting advice $\xi^i(t)$ from each expert $i$, Exp4.R constructs a probability distribution $p(t)$ over $\A$ by weighting all advice according to $q(t)$ and mixing in uniform exploration so that $p_a(t) \geq \pmin$ for all $a \in \A$. Specifically, for all $a$, let
\begin{equation}
	p_a(t) = (1-K\pmin) \sum_{i \in \I} q_i(t) \xi^i_a(t) + \pmin .
\label{eq:exp4.r_action_prob}
\end{equation}
Exp4.R subsequently takes action $a(t)$ sampled according to $p(t)$ and receives the reward $r_{a(t)}(t)$. Time $t$ concludes with weight updates as specified below. For $i \in \I$, Exp4.R estimates $y_i(t)$ by $\hat{y}_i(t)$ and calculates an upper bound on the variance of $\hat{y}_i(t)$ conditional on history until time $t-1$ as given by
\begin{equation}
	\hat{y}_i(t) = \frac{\xi^i_{a(t)}(t) r_{a(t)}(t)}{p_{a(t)}(t)}, \quad \hat{v}_i(t) = \sum_{a \in \A} \frac{\xi^i_a(t)}{p_a(t)}.
\label{eq:exp4.r_update_yv}
\end{equation}
Exp4.R updates each expert's weight $w_i(t)$ using
\begin{equation}
	w_i(t+1) = w_i(t) \exp \left( \frac{\pmin}{2} \left[ \hat{y}_i(t) + \beta \hat{v}_i(t) \right] \right) ,
\label{eq:exp4.r_update_w}
\end{equation}
where $\beta = \sqrt{\ln (2N/\delta) / (KT)}$.
The game ends in $T$ time steps and gives two outputs, namely, the final weight vector $w(T+1)$ and a threshold vector $\epsilon$, the $i$th entry of which is
\begin{equation*}
	\epsilon_i = \left[ 1 + \frac{1}{KT} \sum_{t=1}^T \hat{v}_i(t) \right] \ln \left( \frac{2N}{\delta} \right) .
\end{equation*}

\begin{algorithm}[th!]
	\caption{Exp4.R}
	\label{alg:exp4.r}
\begin{algorithmic}
	\State {\bfseries Input:} $\delta \in (0, 1]$, $T \in \Z$, $\pmin \in (0,1/K]$, $\I \subset \Z$
	\State {\bfseries Output:} $w(T+1)$, $\epsilon$
	\State $\beta \gets \sqrt{\ln (2N/\delta) / (KT)}.$
	\State $w_i(1) \gets 1$ for $i \in \I$.
	\For{$t = 1, \dots , T$}
		\State Get $\xi^i(t)$ for $i \in \I$.
		\State $q_i(t) \gets w_i(t) / \sum_{i' \in \I} w_{i'}(t)$ for $i \in \I$.
		\State $p_a(t) \gets (1-K\pmin) \sum_{i \in \I} q_i(t) \xi^i_a(t) + \pmin$ for $a \in \A$.\label{alg:exp4.r_action_prob}
		\State Sample action $a(t)$ from $p(t)$.
		\State Take action $a(t)$ and receive reward $r_{a(t)}(t)$.
		\For{$i \in \I$}
			\State $\displaystyle
			    \begin{aligned} 
			    	\hat{y}_i(t) &\gets \frac{\xi^i_{a(t)}(t) r_{a(t)}(t)}{p_{a(t)}(t)},\\
			    	\hat{v}_i(t) &\gets \sum_{a \in \A} \frac{\xi^i_a(t)}{p_a(t)},\\
			    	w_i(t+1) &\gets w_i(t) \exp \left( \frac{\pmin}{2} \left[ \hat{y}_i(t) + \beta \hat{v}_i(t) \right] \right) .
			    \end{aligned}$ \label{alg:exp4.r_update_yvw}
		\EndFor
	\EndFor
	\For{$i \in \I$}
		\State
			\begin{equation*}
				\epsilon_i \gets \left[ 1 + \frac{1}{KT} \sum_{t=1}^T \hat{v}_i(t) \right] \ln \left( \frac{2N}{\delta} \right) .
			\end{equation*}
	\EndFor
\end{algorithmic}
\end{algorithm}

\subsection{Properties}\label{sec:exp4r_properties}
We establish in Proposition~\ref{propn:exp4r_ranking} that, with high probability, Exp4.R not only satisfies a regret upper bound of the same order as that proved for Exp4.P but also reveals correct pairwise expert rankings if the corresponding weights are sufficiently separated. We give some intuition here and provide proofs in the appendix.

For simplicity of notation, we denote $R_i([T]) \defeq \sum_{t=1}^T y_i(t)$ as $R_i(T)$. Updating weights using \eqref{eq:exp4.r_update_w} allows us to construct a confidence bound for each $R_i(T)$. For $i \in \I$, let $\hat{R}_i(T) \defeq \sum_{t=1}^T \hat{y}_i(t)$ and $\hat{V}_i(T) \defeq \sum_{t=1}^T \hat{v}_i(t)$. For any $\delta \in (0, 1]$, let $\e (\delta)$ be an event defined by
\begin{align*}
	\forall i \in \I , &\quad -\ln \left( \frac{2N}{\delta} \right) \sqrt{\frac{KT}{\ln N}} - \sqrt{\frac{\ln N}{KT}} \hat{V}_i(T) \\
	&\leq R_i(T) - \hat{R}_i(T) \\
	&\leq \sqrt{\ln \left( \frac{2N}{\delta} \right)} \left( \frac{\hat{V}_i(T)}{\sqrt{KT}} + \sqrt{KT} \right) .
\end{align*}

Lemma~\ref{lemma:pac_bound_on_total_reward_estimation_error} shows that the estimates $\hat{R}_i(T)$ are concentrated around the true values $R_i(T)$. The proof relies on a Freedman-style inequality for martingales from \cite{beygelzimer2011contextual}.

Lemma~\ref{lemma:beygelzimer11_exp4p_regret} establishes an upper bound on the regret of Exp4.R. Since Lemma~\ref{lemma:beygelzimer11_exp4p_regret} is a slight variant of Theorem~2 in \cite{beygelzimer2011contextual}, the proof is very similar to the original one and hence omitted. We note that Theorem~2 in \cite{beygelzimer2011contextual} holds for a smaller regime than stated in the original paper. To be specific, the condition $T=\Omega (K\ln N)$ is essential for $\pmin = \sqrt{\ln{N}/(KT)} \leq 1/K$ to be true. We make the correction in Lemma~\ref{lemma:beygelzimer11_exp4p_regret}.

Lemma~\ref{lemma:exp4r_ranking} validates the correctness of the inferred expert rankings when the concentration event $\e (\delta)$ holds. Corollary~\ref{cor:exp4r_ranking} shows that the uncertainty gap for ranking any pair of experts is the sum of their thresholds given by Exp4.R. We can prove Corollary~\ref{cor:exp4r_ranking} by first taking the contrapositive of the statement in Lemma~\ref{lemma:exp4r_ranking} and then switching $i$ and $i'$.

Finally, we combine the lemmas to obtain Proposition~\ref{propn:exp4r_ranking}. Same as Exp4.P, the computational complexity of Exp4.R is $\bo{KN}$ for space and $\bo{KNT}$ for runtime.

\begin{Aspn}
The following conditions hold: (i) $\max \{4K\ln N , \ln (2N/\delta)/[(e-2)K] \} \leq T$, (ii) and there exists a \emph{uniform expert} $i \in \I$ such that $\xi^i_a(t)=1/K$ for all $a \in \A$ and $t \in \Z$.
\label{assumption:regime_uniform_expert}
\end{Aspn}

\begin{Lemma}
Under Assumption~\ref{assumption:regime_uniform_expert}, if we run Exp4.R with $\pmin = \sqrt{\ln{N}/(KT)}$, then $\pr{\e (\delta)} \geq 1-\delta$ for all $\delta \in (0, 1]$.
\label{lemma:pac_bound_on_total_reward_estimation_error}
\end{Lemma}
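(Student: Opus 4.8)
The plan is to build, for each expert $i \in \I$, two martingale difference sequences out of the per-round estimation errors and to control both tails with the Freedman-style martingale inequality of \cite{beygelzimer2011contextual}, one invocation per tail, then close with a union bound. Writing $\mathcal{F}_{t-1}$ for the history just before the action $a(t)$ is drawn, the quantities $p(t)$, $\xi^i(t)$, and $\hat{v}_i(t)$ are all $\mathcal{F}_{t-1}$-measurable because the adversary is oblivious, so the only residual randomness in $\hat{y}_i(t)$ is the sampling of $a(t) \sim p(t)$. Two elementary computations drive the argument: the estimate is conditionally unbiased, $\cE{\hat{y}_i(t)}{\mathcal{F}_{t-1}} = \sum_{a} p_a(t)\,\frac{\xi^i_a(t) r_a(t)}{p_a(t)} = y_i(t)$, and its conditional second moment is dominated by the recorded proxy, $\cE{\hat{y}_i(t)^2}{\mathcal{F}_{t-1}} = \sum_a \frac{(\xi^i_a(t))^2 r_a(t)^2}{p_a(t)} \leq \sum_a \frac{\xi^i_a(t)}{p_a(t)} = \hat{v}_i(t)$, using $\xi^i_a(t), r_a(t) \in [0,1]$. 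Hence both $X_t \defeq y_i(t) - \hat{y}_i(t)$ and $X'_t \defeq -X_t$ are martingale differences whose summed conditional variances are bounded pathwise by $\hat{V}_i(T)$, which is exactly what lets us substitute $\hat{V}_i(T)$ for the true variance inside the exponential supermartingale.

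For the upper bound on $R_i(T) - \hat{R}_i(T) = \sum_t X_t$, I would apply the inequality to $\{X_t\}$ with step size $\lambda = \beta = \sqrt{\ln(2N/\delta)/(KT)}$. The one-sided range is $X_t \leq y_i(t) \leq 1$, so the admissibility requirement $\lambda \le 1$ follows from Assumption~\ref{assumption:regime_uniform_expert}(i), which gives $\ln(2N/\delta) \le (e-2)KT$ and thus $\beta < 1$. With probability at least $1 - \delta/(2N)$ this yields
\[
R_i(T) - \hat{R}_i(T) \le (e-2)\,\beta\,\hat{V}_i(T) + \frac{\ln(2N/\delta)}{\beta} \le \sqrt{\ln\!\left(\tfrac{2N}{\delta}\right)}\left(\frac{\hat{V}_i(T)}{\sqrt{KT}} + \sqrt{KT}\right),
\]
where the last step substitutes $\beta$, uses $e-2 < 1$ to discard the constant on the variance term, and reads $\ln(2N/\delta)/\beta = \sqrt{\ln(2N/\delta)}\,\sqrt{KT}$. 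This is the upper clause of $\e(\delta)$.

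The lower bound is the asymmetric and more delicate case, because $\hat{y}_i(t)$ is not bounded by an absolute constant but only by $1/\pmin$, a consequence of $p_a(t) \ge \pmin$, which itself needs $\pmin \le 1/K$ and so uses the $4K\ln N \le T$ half of Assumption~\ref{assumption:regime_uniform_expert}(i). I would apply the same inequality to $\{X'_t\}$, whose range is $X'_t = \hat{y}_i(t) - y_i(t) \le 1/\pmin$, but now with the much smaller step size $\lambda = \pmin = \sqrt{\ln N/(KT)}$ so that $\lambda\,(1/\pmin) = 1$ keeps the invocation admissible. This gives, with probability at least $1-\delta/(2N)$,
\[
\hat{R}_i(T) - R_i(T) \le (e-2)\,\pmin\,\hat{V}_i(T) + \frac{\ln(2N/\delta)}{\pmin} \le \sqrt{\frac{\ln N}{KT}}\,\hat{V}_i(T) + \ln\!\left(\frac{2N}{\delta}\right)\sqrt{\frac{KT}{\ln N}},
\]
which, after rearranging, is precisely the lower clause of $\e(\delta)$. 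A union bound over the $2N$ failure events (two tails for each of the $N$ experts), each of probability at most $\delta/(2N)$, then gives $\pr{\e(\delta)} \ge 1-\delta$.

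I expect the main obstacle to be pinning down the two distinct step sizes and verifying that each respects the range constraint of the inequality: the upper tail tolerates $\lambda = \beta$ because the error is capped at $1$, whereas the lower tail must be throttled to $\lambda = \pmin$ precisely because $\hat{y}_i(t)$ can inflate to $1/\pmin$, and it is this choice that turns $\ln(2N/\delta)/\pmin$ into the $\sqrt{KT/\ln N}$ scaling and produces the $\ln N$ (rather than $\ln(2N/\delta)$) coefficients appearing in $\e(\delta)$. Care is also needed to confirm that $\hat{v}_i(t)$ is $\mathcal{F}_{t-1}$-measurable so the summed conditional variance may be replaced pathwise by $\hat{V}_i(T)$, and that Assumption~\ref{assumption:regime_uniform_expert}(i) simultaneously secures $\pmin \le 1/K$ and $\beta \le 1$.
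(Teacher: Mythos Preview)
Your proposal is correct and follows essentially the same route as the paper: both apply the Freedman-style inequality of \cite{beygelzimer2011contextual} once per tail to the martingale differences $y_i(t)-\hat y_i(t)$ and their negatives, use $V\le \hat V_i(T)$, and then union-bound over the $2N$ events. The only cosmetic difference is that the paper invokes the pre-packaged two-case statement (Lemma~\ref{thm:beygelzimer11_martingale}) with the fixed variance proxy $V'=KT$, landing in the first case for the upper tail and the second case for the lower tail, whereas you work with the underlying exponential supermartingale and pick the step sizes $\lambda=\beta$ and $\lambda=\pmin$ directly; these two choices reproduce exactly the two cases the paper uses.
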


\begin{Lemma}
Under Assumption~\ref{assumption:regime_uniform_expert}, for any $\delta \in (0, 1]$, if $\e (\delta)$ holds, then Exp4.R with $\pmin = \sqrt{\ln{N}/(KT)}$ satisfies that $\text{Regret}(T; \I) \leq 7 \sqrt{KT\ln{(2N/\delta)}}$.
\label{lemma:beygelzimer11_exp4p_regret}
\end{Lemma}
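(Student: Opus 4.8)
The plan is to run the standard exponential-weights potential argument and to use the event $\e(\delta)$ to convert estimated rewards into true rewards. Writing $W(t) \defeq \sum_{i \in \I} w_i(t)$, I would bound the log-ratio $\ln\!\big(W(T+1)/W(1)\big)$ from both sides. Since $w_i(1)=1$ for all $N$ experts and the update \eqref{eq:exp4.r_update_w} is multiplicative, keeping only the term of the within-set best expert $j \defeq i^*(\I;[T])$ gives the lower bound $\ln\!\big(W(T+1)/W(1)\big) \ge \frac{\pmin}{2}\big[\hat R_j(T) + \beta \hat V_j(T)\big] - \ln N$.

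For the upper bound I would expand one step at a time. Setting $x_i(t) \defeq \frac{\pmin}{2}[\hat y_i(t) + \beta \hat v_i(t)]$, the first task is to verify $x_i(t) \le 1$ so that $e^{x} \le 1 + x + (e-2)x^2$ applies: this follows from $\hat y_i(t) \le \hat v_i(t) \le 1/\pmin$ (because $r \le 1$, $\xi^i_{a(t)}(t)/p_{a(t)}(t) \le \hat v_i(t)$, and $p_a(t) \ge \pmin$) together with $\beta \le 1$. Combining the expansion with $\ln(1+z) \le z$ and the definition \eqref{eq:exp4.r_action_prob} of $p(t)$, which yields the two identities $\sum_{i} q_i(t)\hat y_i(t) \le \frac{r_{a(t)}(t)}{1-K\pmin}$ and $\sum_{i} q_i(t)\hat v_i(t) \le \frac{K}{1-K\pmin}$, and controlling the quadratic remainder through $\sum_i q_i(t)(\hat y_i(t)+\beta\hat v_i(t))^2 \le \frac{(1+\beta)^2}{\pmin}\sum_i q_i(t)\hat v_i(t)$, I obtain after telescoping an upper bound on $\ln\!\big(W(T+1)/W(1)\big)$ in terms of the learner's realized reward $\sum_t r_{a(t)}(t)$, a linear $\beta KT$ term, and an $(e-2)$ quadratic term.

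Chaining the two bounds and dividing by $\pmin/2$ isolates $\hat R_j(T) + \beta \hat V_j(T)$. The crucial step is then to invoke the upper inequality defining $\e(\delta)$, namely $R_j(T) - \hat R_j(T) \le \sqrt{\ln(2N/\delta)}\big(\hat V_j(T)/\sqrt{KT} + \sqrt{KT}\big)$: because $\beta = \sqrt{\ln(2N/\delta)/(KT)}$ is chosen precisely so that $\beta \hat V_j(T)$ matches the data-dependent variance term $\sqrt{\ln(2N/\delta)}\,\hat V_j(T)/\sqrt{KT}$, the two cancel and leave $\hat R_j(T) + \beta \hat V_j(T) \ge R_j(T) - \sqrt{KT\ln(2N/\delta)}$. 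Recognizing this cancellation — which is exactly why the penalty $\beta \hat v_i$ was built into the update — is the conceptual heart of the argument.

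Finally I would substitute $\pmin = \sqrt{\ln N/(KT)}$ and invoke Assumption~\ref{assumption:regime_uniform_expert}(i): $T \ge 4K\ln N$ forces $K\pmin \le 1/2$, so $1/(1-K\pmin) \le 2$ and $\pmin \le 1/K$, while $\ln(2N/\delta) \le (e-2)KT$ forces $\beta^2 \le e-2 < 1$. With $R_j(T) = R_{i^*(\I;[T])}(T)$ the left side becomes $\text{Regret}(T;\I)$, and each collected term — $2\ln N/\pmin = 2\sqrt{KT\ln N}$, the exploration cost $K\pmin T/(1-K\pmin)$, the $\beta KT$ contribution, the quadratic remainder, and the $\sqrt{KT\ln(2N/\delta)}$ from the concentration step — is $\bo{\sqrt{KT\ln(2N/\delta)}}$ after using $\ln N \le \ln(2N/\delta)$. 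The main obstacle is entirely bookkeeping: bounding the quadratic remainder tightly and tracking every constant so their sum is at most $7$ rather than merely $\bo{1}$; the structural steps are those of Theorem~2 in \cite{beygelzimer2011contextual}.
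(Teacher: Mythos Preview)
Your proposal is correct and follows essentially the same approach the paper intends: the paper omits the proof of Lemma~\ref{lemma:beygelzimer11_exp4p_regret} and simply points to Theorem~2 of \cite{beygelzimer2011contextual}, and your outline is precisely the potential-function argument of that theorem, with the event $\e(\delta)$ used to cancel the $\beta \hat V_j(T)$ penalty against the variance term in the concentration bound. The only residual work, as you note, is the constant bookkeeping to land on~$7$.
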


\begin{Lemma}
Under Assumption~\ref{assumption:regime_uniform_expert}, for any $\delta \in (0, 1]$, if $\e (\delta)$ holds, then Exp4.R with $\pmin = \sqrt{\ln{N}/(KT)}$ satisfies that, for all $i, i' \in \I$, if $\ln w_i(T+1) - \ln w_{i'}(T+1) > \epsilon_i$, then $R_i(T) > R_{i'}(T)$.
\label{lemma:exp4r_ranking}
\end{Lemma}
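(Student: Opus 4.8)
The plan is to convert the hypothesis on the log-weight gap into a lower bound on $R_i(T) - R_{i'}(T)$ by combining the exact form of the weight update \eqref{eq:exp4.r_update_w} with the two-sided concentration guarantee of $\e(\delta)$. First I would unwind the multiplicative update: since $w_i(1)=1$, telescoping gives $\ln w_i(T+1) = \frac{\pmin}{2}\big(\hat{R}_i(T) + \beta \hat{V}_i(T)\big)$ for every $i \in \I$. Subtracting the analogous identity for $i'$ and solving for the estimated reward gap yields
\[
	\hat{R}_i(T) - \hat{R}_{i'}(T) = \frac{2}{\pmin}\big(\ln w_i(T+1) - \ln w_{i'}(T+1)\big) - \beta\big(\hat{V}_i(T) - \hat{V}_{i'}(T)\big),
\]
so that the assumed gap $\ln w_i(T+1) - \ln w_{i'}(T+1) > \epsilon_i$ becomes a lower bound on $\hat{R}_i(T) - \hat{R}_{i'}(T)$.

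Next I would write $R_i(T) - R_{i'}(T) = \big(\hat{R}_i(T) - \hat{R}_{i'}(T)\big) + \big(R_i(T) - \hat{R}_i(T)\big) - \big(R_{i'}(T) - \hat{R}_{i'}(T)\big)$ and bound the two estimation errors using $\e(\delta)$: the lower bound of $\e(\delta)$ applied to index $i$ and the upper bound applied to index $i'$. Substituting the weight identity from the first step together with $\pmin = \sqrt{\ln N/(KT)}$ and $\beta = \sqrt{\ln(2N/\delta)/(KT)}$, I expect the terms carrying $\hat{V}_{i'}(T)$ to cancel exactly. This cancellation is by design: the $+\beta \hat{V}_{i'}(T)$ produced by the weight identity equals $\sqrt{\ln(2N/\delta)}\,\hat{V}_{i'}(T)/\sqrt{KT}$, which is precisely the coefficient subtracted in the upper bound of $\e(\delta)$.

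I would then replace the log-weight gap by $\epsilon_i = \big[1 + \hat{V}_i(T)/(KT)\big]\ln(2N/\delta)$ via the hypothesis, which is where the specific form of the threshold enters. Writing $L \defeq \ln(2N/\delta)$ and $n \defeq \ln N$, the resulting lower bound on $R_i(T) - R_{i'}(T)$ splits into a group independent of $\hat{V}_i(T)$, which reduces to $\sqrt{KT}\,(L/\sqrt{n} - \sqrt{L})$, and a group proportional to $\hat{V}_i(T)/\sqrt{KT}$ with coefficient $2L/\sqrt{n} - \sqrt{L} - \sqrt{n}$.

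The main obstacle is verifying that both groups are nonnegative, so that the strict inequality in the hypothesis propagates to $R_i(T) > R_{i'}(T)$. Both reduce to the elementary fact $L \geq n$, which holds because $2N/\delta \geq N$ for $\delta \in (0,1]$. For the $\hat{V}_i(T)$-independent group, $L/\sqrt{n} \geq \sqrt{L}$ is equivalent to $\sqrt{L} \geq \sqrt{n}$. For the $\hat{V}_i(T)$ coefficient, setting $x = \sqrt{L} \geq y = \sqrt{n} > 0$ and clearing denominators turns $2L/\sqrt{n} - \sqrt{L} - \sqrt{n} \geq 0$ into $2x^2 - xy - y^2 = (2x+y)(x-y) \geq 0$, which holds. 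Since $\hat{v}_i(t) = \sum_{a \in \A} \xi^i_a(t)/p_a(t) \geq 0$ gives $\hat{V}_i(T) \geq 0$, discarding the nonnegative $\hat{V}_i(T)$ group only weakens the bound, and the strictness inherited from the hypothesis delivers $R_i(T) - R_{i'}(T) > 0$, completing the proof.
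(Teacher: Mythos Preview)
Your proposal is correct and follows essentially the same route as the paper: unwind the weight update to express $\hat{R}_i(T)-\hat{R}_{i'}(T)$ via the log-weight gap, substitute the hypothesis $\ln w_i(T+1)-\ln w_{i'}(T+1)>\epsilon_i$, add the two one-sided bounds from $\e(\delta)$, and reduce the resulting positivity to the elementary inequality $\ln(2N/\delta)\geq \ln N$. The only cosmetic difference is that the paper first simplifies the bound on $\hat{R}_i(T)-\hat{R}_{i'}(T)$ (using $2\sqrt{L/n}-1\geq 1$) before adding the concentration inequalities, whereas you combine everything first and then check nonnegativity of the two residual groups; the underlying algebra and the key observation that the $\hat{V}_{i'}(T)$ terms cancel by design are identical.
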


\begin{Cor}
Under the conditions of Lemma~\ref{lemma:exp4r_ranking}, for all $i, i' \in \I$, if $\ln w_i(T+1) - \ln w_{i'}(T+1) > \epsilon_i$, then $R_i(T) > R_{i'}(T)$; if $R_i(T) \geq R_{i'}(T)$, then $\ln w_i(T+1) - \ln w_{i'}(T+1) \geq -\epsilon_{i'}$.
\label{cor:exp4r_ranking}
\end{Cor}

\begin{Propn}
Under Assumption~\ref{assumption:regime_uniform_expert}, for any $\delta \in (0, 1]$, with probability at least $1-\delta$, Exp4.R configured with $\pmin = \sqrt{\ln{N}/(KT)}$ satisfies that
\begin{enumerate}[(i)]
	\item $\text{Regret}(T; \I) \leq 7 \sqrt{KT\ln{(2N/\delta)}}$;
	\item for all $i, i' \in \I$, if $\ln w_i(T+1) - \ln w_{i'}(T+1) > \epsilon_i$, then $R_i(T) > R_{i'}(T)$.
\end{enumerate}
\label{propn:exp4r_ranking}
\end{Propn}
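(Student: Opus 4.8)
The plan is to prove the proposition by conditioning on the single concentration event $\e (\delta)$ and invoking the three preceding lemmas, which between them already do all of the work. The key structural observation is that both conclusions (i) and (ii) are \emph{deterministic} consequences of $\e (\delta)$; the only randomness enters through whether $\e (\delta)$ itself occurs. Consequently the argument reduces to collecting the lemmas rather than producing any new estimate.

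Concretely, I would first fix an arbitrary $\delta \in (0,1]$ and work under Assumption~\ref{assumption:regime_uniform_expert} with $\pmin = \sqrt{\ln N / (KT)}$, as required by all three lemmas. On the event $\e (\delta)$, Lemma~\ref{lemma:beygelzimer11_exp4p_regret} yields conclusion (i), namely $\text{Regret}(T; \I) \leq 7 \sqrt{KT \ln (2N/\delta)}$, and Lemma~\ref{lemma:exp4r_ranking} yields conclusion (ii), namely that $\ln w_i(T+1) - \ln w_{i'}(T+1) > \epsilon_i$ implies $R_i(T) > R_{i'}(T)$ for all $i, i' \in \I$. Since both statements are implications whose hypothesis is exactly $\e (\delta)$, they hold \emph{simultaneously} on that event.

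It then remains only to control $\pr{\e (\delta)}$, which is precisely the content of Lemma~\ref{lemma:pac_bound_on_total_reward_estimation_error}: under Assumption~\ref{assumption:regime_uniform_expert}, $\pr{\e (\delta)} \geq 1-\delta$. Combining this with the previous paragraph, the conjunction of (i) and (ii) holds on $\e (\delta)$, an event of probability at least $1-\delta$, which is exactly the claimed statement.

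The point worth emphasizing — and the only place where one must be slightly careful — is that no union bound is incurred when bundling (i) and (ii) together. A naive reading might suggest applying a failure budget of $\delta/2$ to each property and union-bounding, which would inflate the $\ln (2N/\delta)$ term inside the regret bound. That is unnecessary here precisely because the regret guarantee and the ranking guarantee are not two independent high-probability events but rather two consequences of the \emph{same} good event $\e (\delta)$; the concentration of $\hat{R}_i(T)$ around $R_i(T)$ that underlies Lemma~\ref{lemma:pac_bound_on_total_reward_estimation_error} drives both the regret analysis and the ranking analysis. Recognizing this shared dependence is what lets Exp4.R retain the Exp4.P regret order exactly while additionally certifying the rankings, and it is the one subtlety I would flag explicitly in the writeup.
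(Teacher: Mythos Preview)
Your proposal is correct and matches the paper's approach exactly: the paper's proof is the single line ``Proposition~\ref{propn:exp4r_ranking} follows directly from Lemmas~\ref{lemma:pac_bound_on_total_reward_estimation_error}--\ref{lemma:exp4r_ranking}.'' Your observation that no union bound is needed because (i) and (ii) are both deterministic consequences of the same event $\e(\delta)$ is precisely the point, and your writeup is in fact more explicit about this than the paper itself.
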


\section{SELECTION AMONG INFINITELY MANY EXPERTS}\label{sec:infinite_experts}
In this section, we study the problem of nonstochastic bandits with a countably infinite set of experts. We make no assumptions about the experts or how they are indexed. For this general case, we propose a meta-algorithm called \textbf{Be}st \textbf{E}xpert \textbf{S}earch (BEES, Algorithm~\ref{alg:expert_selection_general}) that runs Exp4.R as a subroutine and provide a high-probability upper bound on regret. Section~\ref{sec:structured_experts} provides an example of structured experts and discusses how the expert ranking property of Exp4.R can be used to expedite learning in such case.

\begin{algorithm}[hb!]
	\caption{\textbf{Be}st \textbf{E}xpert \textbf{S}earch (BEES)}
	\label{alg:expert_selection_general}
\begin{algorithmic}[1]
	\State {\bfseries Input:} $\delta \in (0,1]$, $\alpha \in \Z$, $L \in \Z$, $c \in \Z$, $C \in \Z$
	\For{epoch $l = 1, \dots ,L$}
		\State $N_l \gets c2^{\alpha l}, \, T_l \gets C2^l.$
		\State $\pmin_l \gets \sqrt{\ln{N_l}/(KT_l)}.$
		\State $\I_l \gets [N_l].$
		\State Exp4.R$\left( \delta /L, T_l, \pmin_l, \I_l \right) .$
	\EndFor
\end{algorithmic}
\end{algorithm}

BEES takes five inputs including an error rate $\delta \in (0,1]$, the number of \emph{epochs} $L \in \Z$, and three constants $\alpha,c,C \in \Z$ that control the exponential growth of the epoch length and the number of experts queried in each epoch. At a high level, BEES supplies Exp4.R with an increasing (but still finite) number of experts over epochs, prioritizing those with lower indices. This scheme can be considered as putting a prior on the experts \emph{implicitly} where the experts that are believed to perform well are given low indices. The regret upper bound established in Theorem~\ref{thm:master_algo_pac_regret_general} for BEES adapts to the unknown difficulty of the problem in the sense that $i^*$ being large corresponds to a bad implicit prior. Since we make no assumptions about the experts, they can be ordered using domain knowledge before being input into BEES. Growing the epoch length and the number of experts at exponential rates allows us to derive a regret upper bound of the same order as that of Exp4.R when the best expert $i^*$ has a relatively low index. This idea is similar to, though not the same as, the doubling trick~\cite{besson2018what} as the latter only deals with the epoch length. We need to increase the number of experts at an appropriate rate relative to the epoch length.

Corollary~\ref{cor:master_algo_pac_regret_general} simplifies the bound in Theorem~\ref{thm:master_algo_pac_regret_general} for specific parameter values. Corollary~\ref{cor:master_algo_pac_regret_general} shows that BEES, when tuned right, satisfies $\text{Regret}(T) = \tbo{(i^*)^{1/\alpha}K + \sqrt{\alpha KT}}$ with high probability, where $\tbo{\cdot}$ omits only polylog factors. This upper bound illustrates the trade-off between exploration and exploitation for the problem of bandits with infinitely many experts. On the one hand, we want to include numerous experts in each epoch so as to approach $i^*$ fast. On the other hand, querying too many experts simultaneously necessitates long epochs, which reduces the rate at which more experts are included. This trade-off is controlled by $\alpha \in \Z$. The term $\tbo{(i^*)^{1/\alpha}K}$ in the bound is due to not considering $i^*$ sooner. The other term $\tbo{\sqrt{\alpha KT}}$ is the regret that benchmarks against the best expert in each epoch. Another consideration for not using an arbitrarily large value of $\alpha$ is that the minimum time horizon required by BEES which is $T = \Omega (C(\alpha, c, K, \delta))$ increases with $\alpha$. Although tuning $\alpha$ needs the unknown index $i^*$ of the best expert, we can simply set $\alpha =1$. BEES has space complexity $\bo{K(1+T/K)^\alpha}$ and time complexity $\tbo{K^2(1+T/K)^{\alpha +1}}$.

The regret bound in Theorem~\ref{thm:master_algo_pac_regret_general} matches the lower bound $\tilde{\Omega} (\sqrt{KT})$ derived by~\cite{alekh2012contextual} up to polylog factors when $i^* = \tilde{\mathcal{O}} \big( \sqrt{T/K} \big)$. This regime is less restricted than it seems at first sight. Assuming a prior distribution on $i^*$ shows that the condition on $i^*$ is satisfied with a probability that increases with $T$, the rate of which can be fast. For simplicity, let $\alpha =1$ and $c=1$. In order for $\text{Regret}(T) = \tbo{\sqrt{KT}}$ to hold with high probability, we need $i^* = \tilde{\mathcal{O}} \big( \sqrt{T/K} \big)$. We denote the complement of this event as $\B$. If we suppose that $F(i)=\pr{i^*>i}$ for $i \in \Z$ and some function $F: \Z \to [0,1]$, then $\pr{\B}$ decreases with $T$. For example, if $F(i) \propto i^{-s}$ for some $s>0$, then $\pr{\B}$ is roughly proportional to $K^{s/2}T^{-s/2}$. If $F(i) \propto e^{-si}$ for some $s>0$, then $\pr{\B}$ is roughly proportional to $e^{-s\sqrt{T/K}}$.

Although the worst-case regret guarantee of BEES is the same order as that achieved by running Exp4.P on a subset of $T$ experts for sufficiently small $i^*$, BEES can be configured to expedite learning by exploiting the expert structure if it is known, which we will discuss in Section~\ref{sec:structured_experts}. Section~\ref{sec:experiments} will show in numerical experiments that growing a subset of experts adaptively can improve performance in practice in comparison with fixing a subset of experts a priori. Moreover, the truncation method requires knowledge of $T$, which is not necessary for BEES as we can use sufficiently small $\delta$ instead of $\delta /L$ in the subroutine Exp4.R. Finally, since the computational complexity of Exp4.P is linear in the number of experts for both space and runtime, running Exp4.P on $T$ experts becomes computationally intensive for large $T$.

Before stating Theorem~\ref{thm:master_algo_pac_regret_general}, we provide some intuition for the proof. Lemma~\ref{lemma:pac_bound_on_total_reward_estimation_error} implies that $\sum_{t \in \T_l} \hat{y}_i(t) \approx R_i(\T_l)$ for each expert $i$ and every epoch $l$ with high probability. For this reason, we can prove an upper bound on the regret with respect to the best expert in each epoch, namely, $\sum_{l=1}^L R_{i^*_l}(\T_l) - \sum_{t=1}^T r_{a(t)}(t) = \tbo{\sqrt{\alpha KT}}$. We then derive an upper bound on the gap between the globally best expert and the best expert in each epoch, which is given by $R_{i^*}([T]) - \sum_{l=1}^L R_{i^*_l}(\T_l) = \tbo{(i^*)^{1/\alpha}K}$. Adding the upper bounds, we get $\text{Regret}(T) = \tbo{(i^*)^{1/\alpha}K + \sqrt{\alpha KT}}$.

For simplicity of notation, we suppose that the total number of epochs is $L=\log_2 [1+T/(2C)]$ so that $T=\sum_{l=1}^L T_l$ where $T_l = C2^l$ for $l \in [L]$. We use $\floor{\cdot}$ and $\ceil{\cdot}$ to denote the floor and ceiling functions, respectively. For the general case of $T \geq 2C$, let $L=\floor{\log_2 [1+T/(2C)]}$, $T_l = C2^l$ for $l \in [L-1]$, and $T_L = T - \sum_{l=1}^{L-1} T_l$.

\begin{Thm}
If a uniform expert is available in each epoch, then there exist absolute constants $\alpha \in \Z$ and $c \in \Z$ such that, for some $C(\alpha, c, K, \delta) \in \Z$, BEES satisfies that, for any $\delta \in (0, 1]$, with probability at least $1-\delta$, we have
\begin{align*}
	\text{Regret}(T) < &20\sqrt{\alpha K(T+2C) \ln \left(\frac{cL(2+T/C)}{\delta} \right) }\\
									&+ 2C \left( \frac{i^*}{c} \right)^{\frac{1}{\alpha}}.
\end{align*}
\label{thm:master_algo_pac_regret_general}
\end{Thm}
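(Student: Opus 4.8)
The plan is to decompose the total regret around the per-epoch optima and bound the two resulting pieces separately. Writing $i^*_l \defeq i^*(\I_l;\T_l)$ for the best expert queried during epoch $l$ and using that $[T]=\bigcup_{l=1}^L \T_l$ is a disjoint union so that $R_{i^*}([T])=\sum_l R_{i^*}(\T_l)$, I split
\[
\text{Regret}(T) = \underbrace{\sum_{l=1}^L \big[R_{i^*}(\T_l) - R_{i^*_l}(\T_l)\big]}_{\text{approximation gap}} + \underbrace{\sum_{l=1}^L \Big[R_{i^*_l}(\T_l) - \sum_{t\in\T_l} r_{a(t)}(t)\Big]}_{\text{per-epoch regret}},
\]
where the second bracket is exactly $\sum_l \text{Regret}(\T_l;\I_l)$. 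The first term measures the cost of not yet querying $i^*$; the second is controlled by Exp4.R epoch by epoch.

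For the per-epoch regret I would invoke Proposition~\ref{propn:exp4r_ranking}(i) on each epoch. Since BEES calls Exp4.R with error rate $\delta/L$, horizon $T_l=C2^l$, experts $\I_l=[N_l]$ with $N_l=c2^{\alpha l}$, and $\pmin_l=\sqrt{\ln N_l/(KT_l)}$, Proposition~\ref{propn:exp4r_ranking}(i) gives $\text{Regret}(\T_l;\I_l)\le 7\sqrt{KT_l\ln(2N_lL/\delta)}$ with probability at least $1-\delta/L$, provided Assumption~\ref{assumption:regime_uniform_expert} holds in that epoch. The uniform-expert hypothesis supplies condition (ii); condition (i), namely $\max\{4K\ln N_l,\ \ln(2N_lL/\delta)/[(e-2)K]\}\le T_l$, is where the constant $C(\alpha,c,K,\delta)$ enters: because $\ln N_l$ grows linearly in $l$ while $T_l$ grows geometrically, taking $C$ large enough makes (i) hold for every $l$, the binding case being $l=1$. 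A union bound over the $L$ epochs then makes all per-epoch guarantees hold simultaneously with probability at least $1-\delta$. Summing, I bound $\ln(2N_lL/\delta)=\ln(2cL/\delta)+\alpha l\ln 2$ by its value at $l=L$ and then by $\alpha\ln(2cL\cdot 2^L/\delta)$ (using $\alpha\ge1$ and $\ln(2cL/\delta)\ge0$), which pulls the logarithm out of the sum; the remaining $\sum_{l=1}^L\sqrt{T_l}=\sqrt{C}\sum_l(\sqrt2)^l$ is geometric and at most $(2+\sqrt2)\sqrt{C2^L}$. With $2^L=1+T/(2C)$ one has $C2^L=(T+2C)/2$ and $2cL\cdot2^L/\delta=cL(2+T/C)/\delta$, so the per-epoch regret is at most $7(\sqrt2+1)\sqrt{\alpha K(T+2C)\ln(cL(2+T/C)/\delta)}$, and $7(\sqrt2+1)<20$.

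For the approximation gap, each reward lies in $[0,1]$, so $0\le R_i(\T_l)\le T_l$ for every expert. If $i^*\le N_l$ then $i^*\in\I_l$ and optimality of $i^*_l$ gives $R_{i^*}(\T_l)-R_{i^*_l}(\T_l)\le0$; otherwise the term is at most $T_l$. Since $N_l=c2^{\alpha l}$, the condition $i^*>N_l$ is equivalent to $l<\frac{1}{\alpha}\log_2(i^*/c)$, so the positively contributing epochs are exactly those with $l<l_0\defeq\ceil{\frac{1}{\alpha}\log_2(i^*/c)}$. When $i^*\le N_L$ (equivalently $l_0\le L$) these epochs all use $T_l=C2^l$, so $\sum_{l=1}^{l_0-1}C2^l<C2^{l_0}<2C(i^*/c)^{1/\alpha}$; when $i^*>N_L$ instead every epoch may contribute, but $\sum_{l=1}^L T_l=T$ while $2C(i^*/c)^{1/\alpha}>2C\cdot2^L=T+2C>T$, so the same bound holds. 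Either way the approximation gap is strictly below $2C(i^*/c)^{1/\alpha}$.

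Adding the two bounds gives the claim. The main obstacle is bookkeeping rather than conceptual: I must verify Assumption~\ref{assumption:regime_uniform_expert} in every epoch (which pins down how large $C(\alpha,c,K,\delta)$ must be and why a minimum horizon is required) and track the geometric sum and logarithmic factors tightly enough to land the stated constant $20$. The general case $T_L=T-\sum_{l<L}T_l$ needs minor extra care, since the last epoch's length is no longer exactly $C2^L$, but $T_L<T+2C$ keeps both the per-epoch sum and the case $i^*>N_L$ within the stated bounds.
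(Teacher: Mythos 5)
Your proposal is correct and follows essentially the same route as the paper's own proof: the same decomposition into per-epoch Exp4.R regret (handled by invoking the Exp4.R guarantee with error rate $\delta/L$ and a union bound over epochs, then summing the geometric series to get the $20\sqrt{\alpha K(T+2C)\ln(\cdot)}$ term) plus the approximation gap from epochs that exclude $i^*$, bounded by $\sum_{l\le l_0-1}T_l < 2C(i^*/c)^{1/\alpha}$ with the same case split on whether $i^*\in\I_1$. The one imprecision---asserting that a $C$ depending only on $(\alpha,c,K,\delta)$ makes Assumption~\ref{assumption:regime_uniform_expert}(i) hold even though your condition contains $\ln L$, which grows (doubly logarithmically) with $T$---is inherited verbatim from the paper, which buries the same issue in its final ``change of variables'' from $\delta$ to $\delta/L$.
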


\begin{Cor}
Under the conditions of Theorem~\ref{thm:master_algo_pac_regret_general}, running BEES with $\alpha \in \Z$, $c \in \Z$, and $C=\ceil{\alpha K\ln (16c^4/\delta)}$ satisfies that, for any $\delta \in (0, 1]$, with probability at least $1-\delta$, $\text{Regret}(T) = \tbo{(i^*)^{1/\alpha}K + \sqrt{\alpha KT}}$.
\label{cor:master_algo_pac_regret_general}
\end{Cor}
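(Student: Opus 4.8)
The plan is to instantiate Theorem~\ref{thm:master_algo_pac_regret_general} with the prescribed $C = \ceil{\alpha K \ln (16c^4/\delta)}$ and to show that, under the $\tbo{\cdot}$ convention (treating $\alpha$ and $c$ as the absolute constants the theorem posits, and absorbing every $\log T$ and $\ln(1/\delta)$ factor), each of the two additive terms of the explicit bound collapses onto one of the two terms of the claimed rate. First I would check that this choice clears the theorem's precondition: the ceiling makes $C \in \Z$, and I would verify that $C \geq C(\alpha, c, K, \delta)$, i.e. that the logarithmic form is at least the threshold the theorem's proof demands, so that the explicit bound is actually available. In particular the theorem operates only for $T \geq 2C$ (otherwise $L = \floor{\log_2[1+T/(2C)]}$ is not positive), and I would record the consequence $\alpha K = \bo{T}$ for later use.

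For the second term I would bound $C \leq \alpha K \ln(16c^4/\delta) + 1 = \bo{\alpha K \ln(1/\delta)}$ and use $c \geq 1$ to write $(i^*/c)^{1/\alpha} \leq (i^*)^{1/\alpha}$. Hence $2C(i^*/c)^{1/\alpha} = \bo{\alpha K (i^*)^{1/\alpha} \ln(1/\delta)} = \tbo{(i^*)^{1/\alpha}K}$, since $\ln(1/\delta)$ is a hidden polylog factor and $\alpha,c$ are constants. This recovers the exploration-gap term.

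For the first term the two key moves are splitting the square root and controlling the epoch count. By $\sqrt{a+b} \leq \sqrt a + \sqrt b$ I would write $\sqrt{\alpha K (T+2C)} \leq \sqrt{\alpha KT} + \sqrt{2\alpha K C}$. Substituting $C = \bo{\alpha K \ln(1/\delta)}$ gives $\sqrt{2\alpha K C} = \bo{\alpha K \sqrt{\ln(1/\delta)}}$, and since $\alpha K = \bo{T}$ we have $\alpha K \leq \sqrt{\alpha K \cdot T}$, so this cross term is $\tbo{\sqrt{\alpha KT}}$ rather than a spurious additive $\alpha K$. The logarithmic factor inside the root, $\ln(cL(2+T/C)/\delta)$, is polylog once one notes $L = \log_2[1 + T/(2C)] = \bo{\log T}$. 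Hence the whole first term is $\tbo{\sqrt{\alpha KT}}$.

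Adding the two contributions yields $\text{Regret}(T) = \tbo{(i^*)^{1/\alpha}K + \sqrt{\alpha KT}}$. This is essentially bookkeeping on top of Theorem~\ref{thm:master_algo_pac_regret_general}; the only genuinely load-bearing points are (a) that the prescribed $C$ is large enough to satisfy the theorem's hypothesis, so the explicit bound applies, and (b) that the cross term $\sqrt{\alpha K C}$ produced by $T+2C$ inside the root is dominated by $\sqrt{\alpha KT}$ in the operative regime $\alpha K = \bo{T}$. I expect step (a) — confirming that $\ceil{\alpha K \ln(16c^4/\delta)}$ clears whatever explicit threshold $C(\alpha,c,K,\delta)$ the theorem imposes — to be the main thing to pin down, with everything else routine simplification.
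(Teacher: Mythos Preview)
Your proposal is correct and matches the paper's (implicit) approach: the corollary is stated without proof as a direct simplification of Theorem~\ref{thm:master_algo_pac_regret_general}, and the proof of that theorem explicitly names $C=\ceil{\alpha K\ln(16c^4/\delta)}$ as a valid choice of $C(\alpha,c,K,\delta)$, so your step~(a) is immediate. The remaining bookkeeping you outline---bounding $2C(i^*/c)^{1/\alpha}$ by $\tbo{(i^*)^{1/\alpha}K}$, controlling $L=\bo{\log T}$ and the logarithmic factor, and absorbing the $\sqrt{\alpha K C}$ cross term via $T\geq 2C$---is exactly the routine simplification the paper intends.
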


\begin{proof}[Proof of Theorem~\ref{thm:master_algo_pac_regret_general}]
We can show that, for all $\delta \in (0, 1]$, $\alpha \in \Z$, and $c \in \Z$, there exists $C(\alpha, c, K, \delta) \in \Z$ such that $4K\ln \left( c2^{\alpha l} \right) \leq C2^l$ and $\ln \left( c2^{\alpha l+1}/\delta \right) \leq (e-2)CK2^l$ for all $l \in \Z$. For example, we can set $C=\ceil{\alpha K\ln (16c^4/\delta)}$. Together with the definitions of $N_l$ and $T_l$ in Algorithm~\ref{alg:expert_selection_general}, we have that, for all $\alpha \in \Z$ and $c \in \Z$, there exists $C \in \Z$ such that $4K\ln N_l \leq T_l$ and $\ln (2N_l/\delta) \leq (e-2)KT_l$ for all $l \in \Z$. We fix such integers $\alpha,c,C \in \Z$ for the rest of the proof.

For simplicity of notation, we first consider running Exp4.R$\left( \delta, T_l, \pmin_l, \I_l \right)$ in each epoch $l$ for any $\delta \in (0, 1/L]$ and then apply a change of variables at the end of the proof. We suppose that a uniform expert is available in each epoch. Assumption~\ref{assumption:regime_uniform_expert} is then satisfied for all epochs. For now, we assume that event $\e (\delta)$ holds for all epochs, the probability of which will be discussed at the end of the proof. For simplicity of notation, let $i^*_l \defeq i^*(\I_l;\T_l)$ for $l \in [L]$.

Let $U_l \defeq \alpha l+\log_2{(2c/\delta)}$ for $l \in [L]$. Recall that $\T_l$ is the time interval of epoch $l$ where $|\T_l|=T_l$. By Lemma~\ref{lemma:beygelzimer11_exp4p_regret},
\begin{align*}
	\sum_{l=1}^L R_{i^*_l}(\T_l) - \sum_{t=1}^T r_{a(t)}(t)
	&\leq \sum_{l=1}^L 7\sqrt{KT_l \ln \left( \frac{2N_l}{\delta} \right)} \\
	&= 7\sqrt{KC\ln{2}} \sum_{l=1}^L \sqrt{2^lU_l} \\
	&\leq 7\sqrt{KCU_L\ln{2}} \sum_{l=1}^L 2^{l/2} \\
	&< 20\sqrt{KCU_L} \left( 2^{L/2} -1 \right) .
\end{align*}
Since $L=\log_2 [1+T/(2C)]$, we have
\begin{equation}
	\begin{aligned}
		&\quad \, \sum_{l=1}^L R_{i^*_l}(\T_l) - \sum_{t=1}^T r_{a(t)}(t) \\
		&< 20\sqrt{KCU_L} \left( \sqrt{1+\frac{T}{2C}} -1 \right) \\
		&< 20\sqrt{K\left[ \alpha L+2\ln \left( \frac{2c}{\delta} \right) \right] \left( C+\frac{T}{2} \right) }.
	\end{aligned}
\label{eq:master_algo_proof_estimation_error_upper_bound_general}
\end{equation}

We first discuss the case where $i^* \notin \I_1$. Let $L'$ be the last epoch such that $i^*$ is not considered in Algorithm~\ref{alg:expert_selection_general}. Since $| \I_l | = N_l$, we have $L' = \min \left( L, \ceil{\alpha^{-1}\log_2(i^*/c)} -1 \right)$. Since $i^* \notin \I_1$, we get $L' \geq 1$. By the definition of $i^*_l$, we have $R_{i^*_l}(\T_l) \geq R_{i^*}(\T_l)$ for all $l>L'$. Thus,
\begin{equation}
	\begin{aligned}
		R_{i^*}([T]) - \sum_{l=1}^L R_{i^*_l}(\T_l)
		&\leq \sum_{l=1}^{L'} \left( R_{i^*}(\T_l) - R_{i^*_l}(\T_l) \right) \\
		&\leq \sum_{l=1}^{L'} T_l \\
		&< C2^{L'+1} \\
		&< 2C \left( \frac{i^*}{c} \right)^{\frac{1}{\alpha}}.
	\end{aligned}
\label{eq:master_algo_proof_approximation_error_upper_bound_general}
\end{equation}

We now consider the case where $i^* \in \I_1$. It follows from Algorithm~\ref{alg:expert_selection_general} that $i^* \in \I_l$ for all $l$. Thus, the definition of $i^*_l$ implies that $R_{i^*_l}(\T_l) \geq R_{i^*}(\T_l)$ for all $l$. We define $D \defeq R_{i^*}([T]) - \sum_{l=1}^L R_{i^*_l}(\T_l)$. We then have $D \leq 0$. However, the definition of $i^*$ implies that $D \geq 0$. Therefore, $D=0$ and \eqref{eq:master_algo_proof_approximation_error_upper_bound_general} is satisfied.

Adding \eqref{eq:master_algo_proof_estimation_error_upper_bound_general} and \eqref{eq:master_algo_proof_approximation_error_upper_bound_general} gives
\begin{equation}
\begin{aligned}
	\text{Regret}(T) < &20\sqrt{K\left[ \alpha L+2\ln \left( \frac{2c}{\delta} \right) \right] \left( C+\frac{T}{2} \right) } \\
									&+ 2C \left( \frac{i^*}{c} \right)^{\frac{1}{\alpha}}.
\label{eq:master_algo_proof_regret_before_change_of_variables_general}
\end{aligned}
\end{equation}
Using Lemma~\ref{lemma:pac_bound_on_total_reward_estimation_error} and the union bound over all $L$ epochs, we conclude that \eqref{eq:master_algo_proof_regret_before_change_of_variables_general} holds with probability at least $1-L\delta$. A change of variables gives that, for any $\delta \in (0, 1]$, with probability at least $1-\delta$, we have
\begin{align*}
	\text{Regret}(T)
	< &20\sqrt{K\left[ \alpha L+2\ln \left( \frac{2cL}{\delta} \right) \right] \left( C+\frac{T}{2} \right) } \\
		&+ 2C \left( \frac{i^*}{c} \right)^{\frac{1}{\alpha}} \\
	< &20\sqrt{\alpha K(T+2C) \ln \left(\frac{cL(2+T/C)}{\delta} \right) } \\
		&+ 2C \left( \frac{i^*}{c} \right)^{\frac{1}{\alpha}}.
\end{align*}
\end{proof}

\subsection{Structured Experts}\label{sec:structured_experts}
In this section, we present an example of structured experts that is inspired by the problem of finite-time model selection for RL and discuss how the expert ranking property of Exp4.R can be used to expedite learning in such case.

As RL becomes increasingly integrated into autonomous systems such as agile robots~\cite{hwangbo2019learning}, self-driving vehicles~\cite{kuderer2015learning}, customized fertilizer formulation~\cite{binas2019reinforcement}, and personalized medication dosing~\cite{nemati2016optimal}, it is crucial that the techniques are robust~\cite{matni2019from}. An aspect of robustness is the capability to detect and adjust for model errors. For RL, this entails both model selection and parameter estimation. How to achieve both objectives simultaneously while maintaining provably good performance is an active area of research~\cite{ni2019maximum, abbasi2020regret}. The crux of the problem of online model selection for RL is to balance approximation and estimation errors in a time-dependent manner. As an example, we suppose that there is an infinite sequence of nested model classes. This structure arises naturally when an RL algorithm incorporates increasingly many features over time. Some new features may also just become obtainable while an RL algorithm is running. In fact, it is unknown a priori for many applications what is a minimal feature space that contains an optimal policy. Given an infinite sequence of model classes, the best class to use depends on the horizon or, equivalently, the amount of trajectory data that will become available. Although a larger model class has a smaller approximation error, it tends to have a higher estimation error for a fixed finite horizon. Moreover, if several classes have the same approximation power, the simplest one is typically preferred in consideration of time and space complexity.

Inspired by the problem of finite-time model selection for RL, we propose to consider experts structured in a way that simulates the trade-off between approximation and estimation. In particular, we suppose that the experts are ranked in ascending order of complexity. We propose a variant of BEES which also operates in $L=\bo{\ln{T}}$ epochs with $\T_l$ being the time interval of epoch $l$. Assumption~\ref{aspn:time_variant_model_ranking} stipulates that the total reward is weakly unimodal in expectation with respect to the expert index during any epoch. In addition, the index of the globally best expert is nondecreasing as the epoch increases. See Fig.~\ref{fig:time_variant_assumption} for an illustration. Section~\ref{sec:experiments} will demonstrate in numerical experiments that a noisy unimodal structure can be sufficient in practice.

\begin{figure}[t!]
	\centering
	\includegraphics[width=0.4\textwidth]{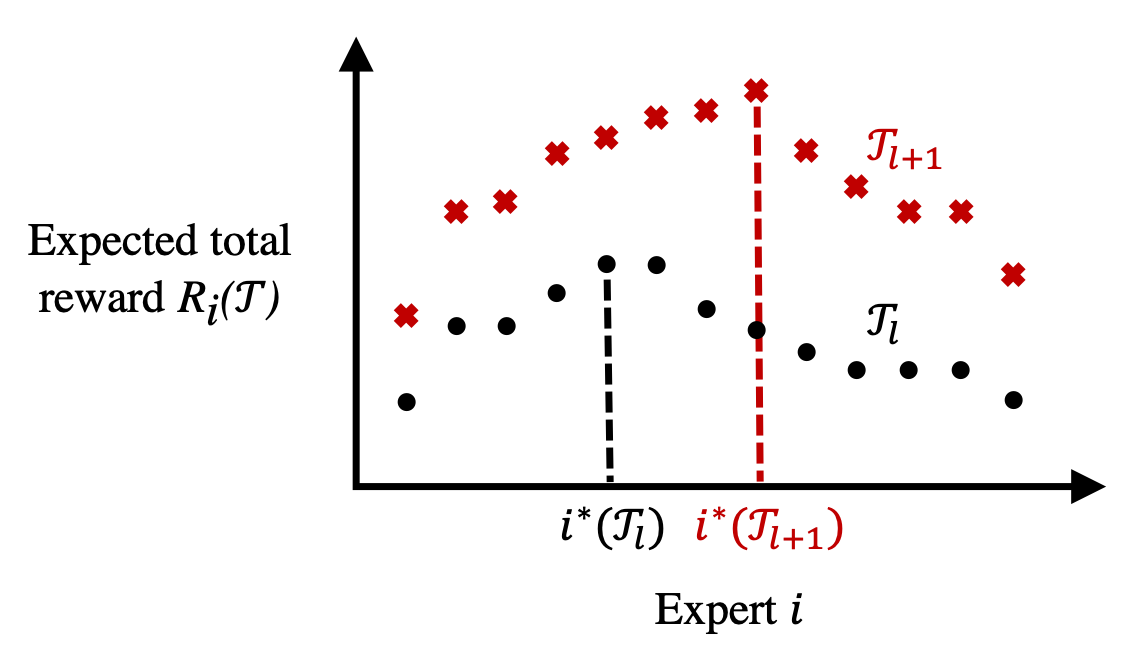}
	\caption{An illustration of Assumption~\ref{aspn:time_variant_model_ranking}.}
\label{fig:time_variant_assumption}
\end{figure}

\begin{Aspn}
For any epoch $l \in [L]$, if $i \leq i^*(\Z;\T_l)$, then $R_{i-1}(\T_l) \leq R_i(\T_l)$. Otherwise, $R_i(\T_l) \geq R_{i+1}(\T_l)$. Moreover, $i^*(\Z;\T_l) \leq i^*(\Z;\T_{l'})$ if $l < l'$.
\label{aspn:time_variant_model_ranking}
\end{Aspn}

The proposed time-dependent unimodal structure is fundamentally related to oracle inequalities in empirical risk minimization~\cite{wainwright2019high}. Although the experts' performance may fluctuate around the proposed structure in practice, solutions to the stylized setting are of theoretical interest. Unimodal bandits have been previously studied for the stochastic setting where the expected reward is a unimodal function of partially ordered arms~\cite{cope2009regret, yu2011unimodal, combes2014unimodal, combes2020unimodal}. Extensions to non-stationary environments have been proposed for low-frequency abrupt changes~\cite{yu2011unimodal} and smooth changes~\cite{combes2014unimodal} in expected rewards. Our setting is a nonstochastic bandit problem with no assumptions on the frequency or the magnitude of changes in the unimodal structure.

Under Assumption~\ref{aspn:time_variant_model_ranking}, the outputs of Exp4.R give a threshold rule that allows us to find a lower bound for $i^*$, which can accelerate the rate of approaching $i^*$. We modify BEES to incorporate lower bound estimation (BEES.LB, Algorithm~\ref{alg:expert_selection_improved}). BEES.LB runs Exp4.R and Probabilistic Thresholding Search (PTS, Algorithm~\ref{alg:search_improved}) as subroutines. In each epoch, BEES.LB eliminates experts identified as suboptimal. Lemma~\ref{lemma:lower_bound_on_optimal_model} shows that the estimated lower bound is correct if the concentration event $\e (\delta)$ holds. Theorem~\ref{thm:master_algo_pac_regret_improved} establishes a high-probability regret upper bound for BEES.LB. The proof is similar to that of Theorem~\ref{thm:master_algo_pac_regret_general}, hence provided in the appendix. PTS has space complexity $\bo{N}$ and time complexity $\bo{N^2}$. PTS can be efficiently implemented by first sorting the input $w$. BEES.LB takes the same space $\bo{K(1+T/K)^\alpha}$ as BEES. The time complexity of BEES.LB is $\tbo{K^2(1+T/K)^{\alpha +1} + (1+T/K)^{2\alpha}}$, which reduces to the runtime of BEES for sufficiently small $\alpha$.

\begin{algorithm}[th!]
	\caption{\textbf{BEES} with \textbf{L}ower \textbf{B}ound (BEES.LB)}
	\label{alg:expert_selection_improved}
\begin{algorithmic}[1]
	\State {\bfseries Input:} $\delta \in (0,1]$, $\alpha \in \Z$, $L \in \Z$, $c \in \Z$, $C \in \Z$
	\State $\underline{i}_1 \gets 1.$
	\For{epoch $l = 1, \dots ,L$}
		\State $N_l \gets c2^{\alpha l}, \, T_l \gets C2^l.$
		\State $\pmin_l \gets \sqrt{\ln{N_l}/(KT_l)}.$
		\State $\I_l \gets \{ \underline{i}_l, \underline{i}_l+1, \dots , \underline{i}_l + N_l-1\}.$
		\State $w^l, \epsilon^l \gets$ Exp4.R$\left( \delta /L, T_l, \pmin_l, \I_l \right) .$
		\State $\underline{i}_{l+1} \gets \text{PTS} \left( w^l, \epsilon^l, \underline{i}_l \right) .$
	\EndFor
\end{algorithmic}
\end{algorithm}

\begin{algorithm}[h!]
	\caption{Probabilistic Thresholding Search (PTS)}
	\label{alg:search_improved}
\begin{algorithmic}
	\State {\bfseries Input:} $w \in (0, \infty)^N$, $\epsilon \in (0, \infty)^N$, $\underline{i} \in \Z$
	\State {\bfseries Output:} $\underline{i}_{\text{new}}$
	\State $\underline{j} \gets 1.$
	\For{$j = 1, \dots , N-1$}
		\For{$j' = j+1, \dots ,N$}
			\If{$\ln{w_{j'}} - \ln{w_j} > \epsilon_{j'}$}
				\State $\underline{j} \gets j+1.$
			\EndIf
		\EndFor
	\EndFor
	\State $\underline{i}_{\text{new}} \gets \underline{i} + \underline{j}-1.$
\end{algorithmic}
\end{algorithm}

\begin{Lemma}
Under Assumption~\ref{aspn:time_variant_model_ranking} and the conditions of Lemma~\ref{lemma:exp4r_ranking}, if event $\e (\delta)$ holds for all epochs, then $\underline{i}_l \leq i^*$ for all $l$.
\label{lemma:lower_bound_on_optimal_model}
\end{Lemma}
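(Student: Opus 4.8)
The plan is to prove the statement by induction on the epoch $l$, tracking what Probabilistic Thresholding Search (PTS) removes at each step. The base case is trivial, since $\underline{i}_1 = 1 \le i^*$. For the inductive step I would assume $\underline{i}_l \le i^*$ and show that the update $\underline{i}_{l+1} = \mathrm{PTS}(w^l, \epsilon^l, \underline{i}_l)$ never pushes the lower bound past $i^*$. First I would read off the meaning of PTS: it advances its internal pointer to $j+1$ precisely when some later position $j'$ satisfies $\ln w^l_{j'} - \ln w^l_j > \epsilon^l_{j'}$. Because $\I_l$ consists of the consecutive indices $\underline{i}_l, \underline{i}_l+1, \dots$, a larger position corresponds to a strictly larger expert index, so I can invoke Lemma~\ref{lemma:exp4r_ranking} (its hypotheses hold since $\e(\delta)$ holds in epoch $l$ and Exp4.R is run with the required $\pmin$). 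Each triggering of the pointer then certifies $R_{j'}(\T_l) > R_j(\T_l)$ for the corresponding experts; that is, every expert PTS discards is strictly beaten, within epoch $l$, by a higher-indexed queried expert.

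I would then invoke the unimodality clause of Assumption~\ref{aspn:time_variant_model_ranking}. Since the epoch reward $R_i(\T_l)$ is nonincreasing in $i$ once $i$ passes the epoch optimum $i^*(\Z;\T_l)$, an expert that is strictly beaten by a higher-indexed expert during $\T_l$ must lie strictly to the left of $i^*(\Z;\T_l)$. Applying this to the largest position that triggers the pointer yields the clean per-epoch guarantee $\underline{i}_{l+1} \le i^*(\Z;\T_l)$.

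The crux is the final passage from the per-epoch optimum to the global optimum, i.e., upgrading $\underline{i}_{l+1} \le i^*(\Z;\T_l)$ to $\underline{i}_{l+1} \le i^*$. Here I would bring in the second clause of Assumption~\ref{aspn:time_variant_model_ranking}, that the optima $i^*(\Z;\T_l)$ are nondecreasing in $l$, together with the fact that $i^* = i^*(\Z;[T])$ is the smallest maximizer of the aggregate $\sum_{l} R_i(\T_l)$. The difficulty is that a globally optimal expert may sit to the left of a later epoch's optimum, so one cannot simply claim $i^*(\Z;\T_l) \le i^*$. The approach I would take is to show that any expert discarded in epoch $l$ is globally suboptimal: the higher-indexed expert witnessing its removal stays on the increasing side of the unimodal profile in every \emph{later} epoch (by monotonicity of the optima), so it weakly dominates the discarded expert there as well, and strictly dominates it on $\T_l$. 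Controlling the \emph{earlier} epochs — where the witness may already have passed the optimum and could lose ground — is the main obstacle, and it is precisely there that the nondecreasing-optima clause, the unimodal shape, and the global optimality of $i^*$ have to be used jointly to rule out that $i^*$ is ever the discarded expert.
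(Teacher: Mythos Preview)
Your induction, base case, and the use of Lemma~\ref{lemma:exp4r_ranking} together with the unimodality clause of Assumption~\ref{aspn:time_variant_model_ranking} to reach the per-epoch bound $\underline{i}_{l+1}\le i^*(\Z;\T_l)$ are exactly the paper's argument; the paper is simply terser. Its entire inductive step reads: if PTS advanced, then some trigger $\ln w_{j'}-\ln w_j>\epsilon_{j'}$ with $j'>j$ occurred, and ``using Assumption~\ref{aspn:time_variant_model_ranking} and Lemma~\ref{lemma:exp4r_ranking}, we get $i^*\ge\underline{i}_l+j=\underline{i}_{l+1}$.'' The paper never routes through the epoch optimum $i^*(\Z;\T_l)$ and does not invoke the monotonicity-of-optima clause at all.

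You are right that this last assertion is the crux, and your concern is well founded: nothing in Assumption~\ref{aspn:time_variant_model_ranking} forces $i^*(\Z;\T_l)\le i^*$, since the aggregate maximizer $i^*=i^*(\Z;[T])$ can sit strictly to the left of a later epoch's peak. Your proposed bridge --- show that every discarded expert is beaten globally by its higher-indexed witness, using that the witness remains on the increasing side in all \emph{later} epochs --- founders precisely on the obstacle you name: in \emph{earlier} epochs the witness may already be past the peak and the discarded expert can be strictly ahead there, and Assumption~\ref{aspn:time_variant_model_ranking} gives no control over how those earlier-epoch deficits aggregate against the later-epoch gains. So your third paragraph does not deliver a proof; it correctly isolates the difficulty but leaves it open. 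The paper's own proof has the same gap --- it simply asserts the conclusion --- so you have not deviated from the paper's route; you have merely been more explicit about where it is incomplete.
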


\begin{Thm}
Under Assumption~\ref{aspn:time_variant_model_ranking}, if a uniform expert is available in each epoch, then there exist absolute constants $\alpha \in \Z$ and $c \in \Z$ such that, for some $C(\alpha, c, K, \delta) \in \Z$, BEES.LB satisfies that, for any $\delta \in (0, 1]$, with probability at least $1-\delta$, we have
\begin{align*}
	\text{Regret}(T) < &20\sqrt{\alpha K(T+2C) \ln \left(\frac{cL(2+T/C)}{\delta} \right) }\\
									&+ 2C \left( \frac{i^*}{c} \right)^{\frac{1}{\alpha}}.
\end{align*}
\label{thm:master_algo_pac_regret_improved}
\end{Thm}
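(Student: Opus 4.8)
The plan is to mirror the proof of Theorem~\ref{thm:master_algo_pac_regret_general} almost verbatim, exploiting the fact that BEES.LB differs from BEES only in that the expert set $\I_l$ in epoch $l$ is the sliding window $\{\underline{i}_l, \dots, \underline{i}_l + N_l - 1\}$ rather than the fixed block $[N_l]$. First I would fix the same absolute constants $\alpha, c, C \in \Z$ (for instance $C = \ceil{\alpha K \ln(16c^4/\delta)}$) so that the regime inequalities $4K\ln N_l \leq T_l$ and $\ln(2N_l/\delta) \leq (e-2)KT_l$ hold for every epoch; together with the assumed availability of a uniform expert, this guarantees that Assumption~\ref{assumption:regime_uniform_expert} is met in each epoch. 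I would then condition on the event $\e(\delta)$ holding simultaneously in all $L$ epochs, which is exactly the hypothesis that both Lemma~\ref{lemma:beygelzimer11_exp4p_regret} and Lemma~\ref{lemma:lower_bound_on_optimal_model} require.

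The regret decomposes as before into an estimation term $\sum_{l=1}^L R_{i^*_l}(\T_l) - \sum_{t=1}^T r_{a(t)}(t)$ and an approximation term $R_{i^*}([T]) - \sum_{l=1}^L R_{i^*_l}(\T_l)$, where now $i^*_l \defeq i^*(\I_l; \T_l)$ is the best expert in the sliding window. For the estimation term, Lemma~\ref{lemma:beygelzimer11_exp4p_regret} applies per epoch without change, since it bounds the regret against the best expert in whatever finite set Exp4.R is run on; the identical chain of inequalities then yields $20\sqrt{\alpha K(T+2C)\ln(cL(2+T/C)/\delta)}$ after summing the geometric series and applying the change of variables $\delta \mapsto \delta/L$ together with the union bound from Lemma~\ref{lemma:pac_bound_on_total_reward_estimation_error}.

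The one genuinely new step is the approximation term, and the key is to argue that $i^*$ is captured by the window $\I_l$ no later than it would be by the block $[N_l]$ in BEES. Lemma~\ref{lemma:lower_bound_on_optimal_model} supplies $\underline{i}_l \leq i^*$ for every epoch, so the window never slides past the best expert; this is precisely where Assumption~\ref{aspn:time_variant_model_ranking} enters. Since $\underline{i}_l \geq 1$, the top of the window satisfies $\underline{i}_l + N_l - 1 \geq N_l$, so $i^* \leq N_l$ already forces $i^* \in \I_l$. Consequently the last epoch $L'$ with $i^* \notin \I_l$ is bounded by the same quantity $\min(L, \ceil{\alpha^{-1}\log_2(i^*/c)} - 1)$ as in BEES. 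For $l > L'$ the window contains $i^*$, giving $R_{i^*_l}(\T_l) \geq R_{i^*}(\T_l)$, and for $l \leq L'$ the trivial bound $R_{i^*}(\T_l) - R_{i^*_l}(\T_l) \leq T_l$ applies; summing gives $\sum_{l=1}^{L'} T_l < C2^{L'+1} < 2C(i^*/c)^{1/\alpha}$, exactly as in \eqref{eq:master_algo_proof_approximation_error_upper_bound_general}. The case $i^* \in \I_1$ (where $\I_1 = [N_1]$ because $\underline{i}_1 = 1$) is handled as before: $i^* \in \I_l$ for all $l$ forces the approximation term to be nonpositive, while its definition forces it nonnegative, hence zero.

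I expect the main obstacle to be the careful verification of this window-inclusion claim, namely that the adaptive lower bound $\underline{i}_l$ produced by PTS can only help: it never excludes $i^*$ (by Lemma~\ref{lemma:lower_bound_on_optimal_model}) while the window's upper endpoint $\underline{i}_l + N_l - 1$ still dominates the BEES endpoint $N_l$. Once this monotonicity of inclusion is established, the rest is a transcription of the earlier argument, and adding the two bounds reproduces the stated inequality with probability at least $1-\delta$.
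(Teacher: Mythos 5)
Your proposal is correct and follows essentially the same route as the paper's own proof: the same epoch-wise decomposition, the same per-epoch application of Lemma~\ref{lemma:beygelzimer11_exp4p_regret} for the estimation term, and the same key window-inclusion argument (Lemma~\ref{lemma:lower_bound_on_optimal_model} giving $\underline{i}_l \leq i^*$ together with $\underline{i}_l + N_l - 1 \geq N_l$) showing that BEES.LB captures $i^*$ no later than BEES would, so the approximation term obeys the same bound $2C(i^*/c)^{1/\alpha}$. The paper formalizes this by comparing the last excluded epoch $L''$ for BEES.LB against the worst-case $L' = \min \left( L, \ceil{\alpha^{-1}\log_2(i^*/c)} - 1 \right)$, which is exactly the monotonicity-of-inclusion claim you identify and verify.
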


The upper bound in Theorem~\ref{thm:master_algo_pac_regret_improved} is the same as that for the general case of unstructured experts because the lower bound from PTS can stay at $1$ in the worst case. A trivial example is that all experts are the same. For cases where the experts' performance differs by sufficient margins, the actual improvement of BEES.LB over BEES should become obvious as we will demonstrate in Section~\ref{sec:experiments}.

If the globally best expert $i^*$ is fixed over time, then we can modify BEES.LB to additionally estimate an upper bound on $i^*$, initialized to $\infty$. The modified search subroutine can be considered as a probabilistic counterpart of search algorithms such as the golden-section search~\cite{kiefer1953sequential}. The major difference is that the search subroutine applies to problems where the function cannot be evaluated directly. We can show that the confidence interval for $i^*$ contracts over epochs. While the epoch length always grows exponentially, the set of experts considered in each epoch is data-dependent. If no upper bound on $i^*$ has been identified, then the number of experts considered will increase by a factor of $2^{\alpha}$ in the next epoch. Otherwise, only the experts in the non-expanding confidence interval will be considered from now on.

\section{EXPERIMENTS}\label{sec:experiments}
We conduct numerical simulations to demonstrate the performance improvement of BEES.LB in comparison with BEES and Exp4.P when experts are structured. We consider $K=10$ actions the rewards of which are binary and nonstochastic. The sequence of experts has a weakly unimodal structure that is corrupted with random noise. The first expert is uniform and the best expert has index $i^*=9$. At each time, every expert's advice is distorted with an additive $K$-vector that consists of independent zero-mean Gaussian noises with standard deviation $0.01$, which may alter the unimodal structure of the experts. For BEES and BEES.LB, we set $\alpha =c=1$ and $C$ as defined in Corollary~\ref{cor:master_algo_pac_regret_general}. We implement the version of BEES and BEES.LB that does not know the number of epochs $L$ or the time horizon $T$ a priori by using $\delta$ instead of $\delta /L$ in the subroutine Exp4.R. In contrast, we configure the benchmark algorithm Exp4.P with the correct $T$. Exp4.P is run on the first $T$ experts in the sequence. All algorithms use an error rate $\delta =0.05$.

Fig.~\ref{fig:regret} shows that BEES.LB indeed has a lower regret than BEES because of the expedited learning enabled by Exp4.R. For large $T$, Exp4.P is surpassed by BEES.LB as querying too many experts can increase the chance of getting bad advice. Fig.~\ref{fig:regret} demonstrates the advantage of our algorithm in having improved performance by being able to exploit structural information and query experts adaptively.

\begin{figure}[b!]
	\centering
	\includegraphics[scale=0.5]{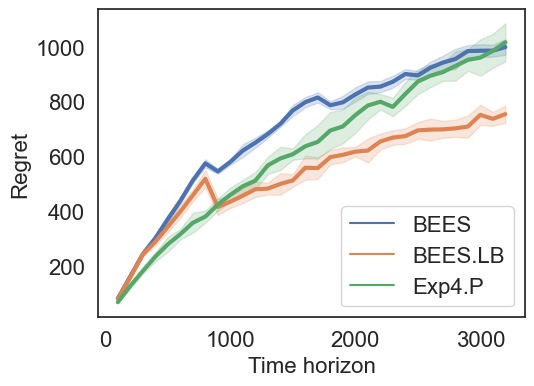}
	\caption{Comparison of BEES, BEES.LB, and Exp4.P in terms of regret as the time horizon varies. BEES.LB surpasses BEES and Exp4.P as the time horizon increases. Lines and shades are the averages and the standard deviations of $10$ runs, respectively.}
\label{fig:regret}
\end{figure}

\section{DISCUSSION}\label{sec:conclusions}
In this paper, we have proposed an algorithm for the problem of nonstochastic bandits with infinitely many experts under the constraint of having access to only a finite subset of experts. We have established a high-probability upper bound on the regret of our meta-algorithm BEES, which matches the lower bound up to polylog factors if the globally best expert has a relatively low index. If we assume that there exists a prior distribution on the best expert, then the probability that our regret upper bound is tight will increase with the time horizon, the rate of which can be fast. The expert ranking property of the subroutine Exp4.R enables learning acceleration if the structure of the experts is known. We have illustrated this point with an example that is inspired by the problem of finite-time model selection for RL. One interesting direction for future work is to obtain instance-dependent upper bounds in terms of the experts' suboptimality gaps. Such instance-dependent bounds can be used to prove the learning acceleration enabled by Exp4.R. It is also worthwhile to design efficient implementation for specific applications.

\section{ACKNOWLEDGMENTS}

The authors would like to thank John N. Tsitsiklis, Dylan J. Foster, Caroline Uhler, Devavrat Shah, and Thibaut Horel for insightful comments that helped to improve the paper.

\appendix

\section{Proof of Lemma~\ref{lemma:pac_bound_on_total_reward_estimation_error}}
Let $\mathbb{E}_t \sbr{\cdot}$ denote the conditional expectation given history until time $t-1$. We can show that $\hat{y}_i(t)$ is a conditionally unbiased estimator for $y_i(t)$. In other words, $\mathbb{E}_t \sbr{\hat{y}_i(t)} = y_i(t)$ for all $i$ and $t$. Lemma~\ref{lemma:beygelzimer11_upper_bound_on_error_variance} shows that $\hat{v}_i(t)$ is an upper bound on the conditional variance of $\hat{y}_i(t)$. Lemma~\ref{thm:beygelzimer11_martingale} is a Freedman-style inequality for martingales from \cite{beygelzimer2011contextual}. The proof of Lemma~\ref{lemma:pac_bound_on_total_reward_estimation_error} relies on Lemmas~\ref{lemma:beygelzimer11_upper_bound_on_error_variance} and \ref{thm:beygelzimer11_martingale}.

\begin{Lemma}[From proof of Lemma 3 in~\cite{beygelzimer2011contextual}]
For all $t \in \Z$ and $i \in \I$, we have $\mathbb{E}_t \sbr{\left( y_i(t) - \hat{y}_i(t) \right)^2} \leq \hat{v}_i(t)$.
\label{lemma:beygelzimer11_upper_bound_on_error_variance}
\end{Lemma}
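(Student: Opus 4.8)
The plan is to recognize the left-hand side as a conditional variance and bound it by the conditional second moment, which I can then evaluate exactly by averaging over the action distribution. Since the text immediately preceding the statement records that $\hat{y}_i(t)$ is conditionally unbiased, i.e. $\mathbb{E}_t \sbr{\hat{y}_i(t)} = y_i(t)$, the quantity $\mathbb{E}_t \sbr{\left( y_i(t) - \hat{y}_i(t) \right)^2}$ is precisely the conditional variance of $\hat{y}_i(t)$, and dropping the nonnegative $-y_i(t)^2$ term gives $\mathbb{E}_t \sbr{\left( y_i(t) - \hat{y}_i(t) \right)^2} \leq \mathbb{E}_t \sbr{\hat{y}_i(t)^2}$.

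Next I would compute the second moment directly. Conditioned on the history until time $t-1$, the advice $\xi^i(t)$, the oblivious adversary's reward vector $r(t)$, and hence $p(t)$ (a deterministic function of the current weights and the advice) are all fixed, so the sole randomness in $\hat{y}_i(t)$ is the sampled action $a(t)$, which satisfies $\pr{a(t)=a} = p_a(t)$. Because $p_a(t) \geq \pmin > 0$ for every $a$, each $\hat{y}_i(t) = \xi^i_{a(t)}(t) r_{a(t)}(t) / p_{a(t)}(t)$ is well defined, and averaging over $a(t)$ yields
$$\mathbb{E}_t \sbr{\hat{y}_i(t)^2} = \sum_{a \in \A} p_a(t) \frac{\left( \xi^i_a(t) \right)^2 \left( r_a(t) \right)^2}{\left( p_a(t) \right)^2} = \sum_{a \in \A} \frac{\left( \xi^i_a(t) \right)^2 \left( r_a(t) \right)^2}{p_a(t)}.$$

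The final step is a term-by-term bound on the numerators. Since each advice entry satisfies $\xi^i_a(t) \in [0,1]$ we have $\left( \xi^i_a(t) \right)^2 \leq \xi^i_a(t)$, and since each reward satisfies $r_a(t) \in [0,1]$ we have $\left( r_a(t) \right)^2 \leq 1$; together these give $\left( \xi^i_a(t) \right)^2 \left( r_a(t) \right)^2 \leq \xi^i_a(t)$ for every $a$. Summing, the second moment is at most $\sum_{a \in \A} \xi^i_a(t)/p_a(t) = \hat{v}_i(t)$, which is the claim. There is no substantive obstacle here; the only point requiring care is stating the conditioning convention precisely, so that $r(t)$, $\xi^i(t)$, and $p(t)$ may legitimately be treated as constants under $\mathbb{E}_t$ and the conditional expectation replaced by a $p(t)$-weighted sum over the actions.
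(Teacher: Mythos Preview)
Your argument is correct and is precisely the standard derivation from the cited reference: bound the conditional variance by the conditional second moment, evaluate the latter as a $p(t)$-weighted sum over actions, and use $\xi^i_a(t), r_a(t) \in [0,1]$ termwise. The paper itself does not supply a proof of this lemma---it simply imports the statement from the proof of Lemma~3 in~\cite{beygelzimer2011contextual}---so there is nothing further to compare.
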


\begin{Lemma}[\cite{beygelzimer2011contextual}, Theorem 1]
Let $X_1, \dots , X_T$ be a sequence of real-valued random variables. For any real-valued random variable $Y$, we define $\mathbb{E}_t \sbr{Y} \defeq \cE{Y}{X_1, \dots , X_{t-1}}$. We assume that, $X_t \leq B$ and $\mathbb{E}_t \sbr{X_t} =0$ for all $t$. We define the random variables
\begin{equation*}
	S \defeq \sum_{t=1}^T X_t, \quad V \defeq \sum_{t=1}^T \mathbb{E}_t \sbr{X_t^2}.
\end{equation*}
For any fixed estimate $V'>0$ of $V$, and for any $\delta \in (0, 1]$, with probability at least $1-\delta$, we have
\begin{equation*}
	\renewcommand{\arraystretch}{1.2}
	S \leq \left\{
		\begin{array}{@{}l@{\ \ }l@{}}
			\sqrt{(e-2)\ln \left( \frac{1}{\delta} \right) } \left( \frac{V}{\sqrt{V'}} + \sqrt{V'} \right) , & \text{if } V' \geq \frac{B^2\ln (1/\delta)}{e-2}, \\
			B \ln (1/\delta) + (e-2) \frac{V}{B}, & \text{otherwise}.
		\end{array}
	\right.
\end{equation*}
\label{thm:beygelzimer11_martingale}
\end{Lemma}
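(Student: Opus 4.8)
The plan is to prove this via the exponential-moment (Chernoff) method for supermartingales and then optimize the free parameter against the deterministic proxy $V'$. The cornerstone is the elementary inequality $e^x \le 1 + x + (e-2)x^2$, valid for every $x \le 1$ because $x \mapsto (e^x - 1 - x)/x^2$ is increasing and equals $e-2$ at $x=1$. First I would fix an arbitrary \emph{nonrandom} $\lambda \in (0, 1/B]$. Since $X_t \le B$ forces $\lambda X_t \le 1$, substituting $x = \lambda X_t$, taking the conditional expectation $\mathbb{E}_t[\cdot]$, and combining $\mathbb{E}_t[X_t] = 0$ with $1+y \le e^y$ yields the per-step moment bound
\[
    \mathbb{E}_t\!\left[ e^{\lambda X_t} \right] \le \exp\!\left( (e-2)\lambda^2 \, \mathbb{E}_t[X_t^2] \right).
\]

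Next I would form the exponential process $M_t \defeq \exp\!\big( \lambda S_t - (e-2)\lambda^2 V_t \big)$, where $S_t$ and $V_t$ denote the partial sums of $X_s$ and $\mathbb{E}_s[X_s^2]$ through time $t$, with $M_0 = 1$. Because $M_{t-1}$ and $\mathbb{E}_t[X_t^2]$ are both history-measurable, the moment bound gives $\mathbb{E}_t[M_t] \le M_{t-1}$, so $(M_t)$ is a nonnegative supermartingale with $\mathbb{E}[M_T] \le 1$. Applying Markov's inequality to the event $\{ M_T \ge 1/\delta \}$ then yields, for this fixed $\lambda$ and with probability at least $1-\delta$,
\[
    S < \frac{\ln(1/\delta)}{\lambda} + (e-2)\lambda V.
\]
The key point to respect here is that $\lambda$ must be selected independently of the realized value of $V$, or the supermartingale step is invalid.

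Finally I would optimize this template over $\lambda$ using only the fixed estimate $V'$. Minimizing $\ln(1/\delta)/\lambda + (e-2)\lambda V'$ gives $\lambda^\star = \sqrt{\ln(1/\delta)/[(e-2)V']}$, and the feasibility requirement $\lambda^\star \le 1/B$ is exactly $V' \ge B^2\ln(1/\delta)/(e-2)$. Substituting $\lambda = \lambda^\star$ into the displayed tail bound, whose linear term still carries the \emph{true} $V$ rather than $V'$, reproduces the first branch $\sqrt{(e-2)\ln(1/\delta)}\big( V/\sqrt{V'} + \sqrt{V'} \big)$. When $V'$ falls below the threshold, $\lambda^\star$ leaves the feasible interval, so since the template is decreasing on $(0,\lambda^\star)$ I would clip to the boundary $\lambda = 1/B$, giving the second branch $B\ln(1/\delta) + (e-2)V/B$.

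The only genuinely subtle aspect is maintaining the separation between the deterministic proxy $V'$, which drives the choice of $\lambda$ and must stay nonrandom, and the true random $V$, which survives in the final guarantee; keeping these distinct is precisely what makes the result an empirical-Bernstein-type bound rather than a plain Freedman bound. Once that distinction is in place, checking that the two admissible choices of $\lambda$ match the two displayed cases is a short algebraic verification rather than a real obstacle.
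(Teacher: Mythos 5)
Your proof is correct. The paper does not actually prove this lemma---it imports it by citation as Theorem~1 of~\cite{beygelzimer2011contextual}---and your argument (the inequality $e^x \leq 1 + x + (e-2)x^2$ for $x \leq 1$, the exponential supermartingale with the conditional variance compensator, Markov's inequality, and then optimizing the deterministic $\lambda$ against the fixed proxy $V'$ with clipping at $\lambda = 1/B$) is precisely the standard proof given in that reference, including the correct handling of the subtle point that $\lambda$ must be nonrandom while the true $V$ survives in the final bound.
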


\begin{proof}[Proof of Lemma~\ref{lemma:pac_bound_on_total_reward_estimation_error}]
We now fix any $i \in \I$ and $t \in \Z$. By definition, we have $y_i(t) \in [0,1]$. Using~\eqref{eq:exp4.r_action_prob} and the assumption that $\pmin \in [0, 1/K]$, we get $p_a(t) \geq \pmin$ for all $a \in \A$. Thus, \eqref{eq:exp4.r_update_yv} implies that $\hat{y}_i(t) \in [0, 1/ \pmin ]$ almost surely. Let ${X_t = y_i(t) - \hat{y}_i(t)}$. We then have $-1/\pmin \leq X_t \leq 1$ almost surely. We can show that $\mathbb{E}_t \sbr{\hat{y}_i(t)} = y_i(t)$ and hence $\mathbb{E}_t \sbr{X_t} = 0$. We recall that $R_i(T) = \sum_{t=1}^T y_i(t)$. Applying Lemma~\ref{thm:beygelzimer11_martingale} to $(X_t)_t$ and $(-X_t)_t$ respectively and then taking a union bound, we conclude that, for any $\delta \in (0, 1]$, with probability at least $1-\delta / N$, the inequality ${-B_1 \leq R_i(T) - \hat{R}_i(T) \leq B_2}$ holds, where
\begin{align*}
	\renewcommand{\arraystretch}{1.2}
	B_1 \! &\defeq \! \left\{
		\begin{array}{@{}l@{\ }l@{}}
			\sqrt{(e-2)\ln \left( \frac{2N}{\delta} \right)} \left( \frac{V}{\sqrt{V'}} + \sqrt{V'} \right) \! , & \text{if } V' \geq \frac{\ln (2N/\delta)}{(e-2)\pmin^2}, \\
			\frac{\ln (2N/\delta)}{\pmin} + (e-2) \pmin V, & \text{otherwise},
		\end{array}
	\right. \\
	B_2 \! &\defeq \! \left\{
		\begin{array}{@{}l@{\ }l@{}}
			\sqrt{(e-2)\ln \left( \frac{2N}{\delta} \right)} \left( \frac{V}{\sqrt{V'}} + \sqrt{V'} \right) \! , & \text{if } V' \geq \frac{\ln (2N/\delta)}{(e-2)}, \\
			\ln (2N/\delta) + (e-2)V, & \text{otherwise},
		\end{array}
	\right. \\
	V \! &\defeq \! \sum_{t=1}^T \mathbb{E}_t \sbr{X_t^2}.
\end{align*}

We now fix an arbitrary $\delta \in (0, 1]$. Assumption~\ref{assumption:regime_uniform_expert} implies that $\ln (2N/\delta) \leq (e-2)KT$. Taking $\pmin = \sqrt{\ln{N}/(KT)}$ and $V' = KT$, we have
\begin{equation*}
	\frac{\ln (2N/\delta)}{e-2} \leq V' < \frac{\ln (2N/\delta)}{(e-2) \pmin^2}.
\end{equation*}
Lemma~\ref{lemma:beygelzimer11_upper_bound_on_error_variance} implies that $V \leq \hat{V}_i(T)$. Therefore, with probability at least $1-\delta / N$, we have
\begin{align*}
	&\quad -\ln \left( \frac{2N}{\delta} \right) \sqrt{\frac{KT}{\ln N}} - \sqrt{\frac{\ln N}{KT}} \hat{V}_i(T) \\
	&\leq R_i(T) - \hat{R}_i(T) \\
	&\leq \sqrt{\ln \left( \frac{2N}{\delta} \right)} \left( \frac{\hat{V}_i(T)}{\sqrt{KT}} + \sqrt{KT} \right) .
\end{align*}
Applying the union bound over $i \in \I$, we conclude that $\pr{\e (\delta)} \geq 1-\delta$.
\end{proof}

\begin{proof}[Proof of Lemma~\ref{lemma:exp4r_ranking}]
We fix an arbitrary $\delta \in (0, 1]$ and suppose that event $\e (\delta)$ holds. We recall that
\begin{equation*}
	\epsilon_i \defeq \left[ 1 + \frac{\hat{V}_i(T)}{KT} \right] \ln \left( \frac{2N}{\delta} \right) .
\end{equation*}
We assume that $\ln w_i(T+1) - \ln w_{i'}(T+1) > \epsilon_i$ for some $i, i' \in \I$. By \eqref{eq:exp4.r_update_w} and the initialization condition $w_i(1)=1$, we have
\begin{align*}
	\ln w_i(T+1) &= \sum_{t=1}^T \ln \left( \frac{w_i(t+1)}{w_i(t)} \right) \\
	&= \frac{\pmin}{2} \left( \hat{R}_i(T) + \sqrt{\frac{\ln (2N/\delta)}{KT}} \hat{V}_i(T) \right) .
\end{align*}
Thus,
\begin{equation}
	\hat{R}_i(T) = \frac{2}{\pmin} \ln w_i(T+1) - \sqrt{\frac{\ln (2N/\delta)}{KT}} \hat{V}_i(T).
\label{eq:proof_exp4r_ranking_total_reward_estimate}
\end{equation}
Equation~\eqref{eq:proof_exp4r_ranking_total_reward_estimate} also holds for $i'$. Thus,

\begin{equation}
	\begin{aligned}
		&\quad\ \hat{R}_i(T) - \hat{R}_{i'}(T) \\
		&= \frac{2}{\pmin} \ln \left( \frac{w_i(T+1)}{w_{i'}(T+1)} \right) \\
		&\ \ \ - \sqrt{\frac{\ln (2N/\delta)}{KT}} \left( \hat{V}_i(T) - \hat{V}_{i'}(T) \right) \\
		&> \frac{2\epsilon_i}{\pmin} - \sqrt{\frac{\ln (2N/\delta)}{KT}} \left( \hat{V}_i(T) - \hat{V}_{i'}(T) \right) \\
		&= 2\ln \left( \frac{2N}{\delta} \right) \sqrt{\frac{KT}{\ln N}} + \sqrt{\frac{\ln (2N/\delta)}{KT}} \hat{V}_{i'}(T) \\
		&\ \ \ + \hat{V}_i(T) \sqrt{\frac{\ln (2N/\delta)}{KT}} \left[ 2\sqrt{\frac{\ln (2N/\delta)}{\ln N}} - 1 \right] \\
		&> 2\ln \left( \frac{2N}{\delta} \right) \sqrt{\frac{KT}{\ln N}} \\
		&\ \ \ + \sqrt{\frac{\ln (2N/\delta)}{KT}} \left( \hat{V}_i(T) + \hat{V}_{i'}(T) \right) .
	\end{aligned}
\label{eq:proof_exp4r_ranking_total_reward_estimate_diff}
\end{equation}

Event $\e (\delta)$ implies that
\begin{equation}
	\begin{aligned}
		&\quad\ R_i(T) - \hat{R}_i(T) + \hat{R}_{i'}(T) - R_{i'}(T) \\
		&\geq -\ln \left( \frac{2N}{\delta} \right) \sqrt{\frac{KT}{\ln N}} - \sqrt{\frac{\ln N}{KT}} \hat{V}_i(T) \\
		&\ \ \ - \sqrt{\ln \left( \frac{2N}{\delta} \right)} \left( \frac{\hat{V}_{i'}(T)}{\sqrt{KT}} + \sqrt{KT} \right) .
	\end{aligned}
\label{eq:proof_exp4r_ranking_total_reward_estimate_error_sum}
\end{equation}

Adding \eqref{eq:proof_exp4r_ranking_total_reward_estimate_diff} and \eqref{eq:proof_exp4r_ranking_total_reward_estimate_error_sum} and then simplifying the algebra give
\begin{equation*}
	R_i(T) - R_{i'}(T) >0.
\end{equation*}
\end{proof}

\begin{proof}[Proof of Proposition~\ref{propn:exp4r_ranking}]
Proposition~\ref{propn:exp4r_ranking} follows directly from Lemmas~\ref{lemma:pac_bound_on_total_reward_estimation_error}--\ref{lemma:exp4r_ranking}.
\end{proof}

\section{Proof of Theorem~\ref{thm:master_algo_pac_regret_improved}}

\begin{proof}[Proof of Lemma~\ref{lemma:lower_bound_on_optimal_model}]
Under the assumption that event $\e (\delta)$ holds for all epochs, we prove the statement by induction on $l$. The base case holds trivially as $\underline{i}_1 =1$. For the inductive step, we assume that $\underline{i}_{\iota} \leq i^*$ for all $\iota \leq l$. If $\underline{i}_{l+1} = \underline{i}_l$, then $i^* \geq \underline{i}_{l+1}$ by the induction hypothesis. If there exists some $j \geq 1$ such that $\underline{i}_{l+1} = \underline{i}_l +j$, then Algorithm~\ref{alg:search_improved} implies that $\ln{w_{j'}} - \ln{w_j} > \epsilon_{j'}$ for some $j'>j$ in epoch $l$. Using Assumption~\ref{aspn:time_variant_model_ranking} and Lemma~\ref{lemma:exp4r_ranking}, we get $i^* \geq \underline{i}_l+j = \underline{i}_{l+1}$.
\end{proof}

\begin{proof}[Proof of Theorem~\ref{thm:master_algo_pac_regret_improved}]
We can show that, for all $\delta \in (0, 1]$, $\alpha \in \Z$, and $c \in \Z$, there exists $C(\alpha, c, K, \delta) \in \Z$ such that $4K\ln \left( c2^{\alpha l} \right) \leq C2^l$ and $\ln \left( c2^{\alpha l+1}/\delta \right) \leq (e-2)CK2^l$ for all $l \in \Z$. For example, we can set $C=\ceil{\alpha K\ln (16c^4/\delta)}$. Together with the definitions of $N_l$ and $T_l$ in Algorithm~\ref{alg:expert_selection_improved}, we have that, for all $\alpha \in \Z$ and $c \in \Z$, there exists $C \in \Z$ such that $4K\ln N_l \leq T_l$ and $\ln (2N_l/\delta) \leq (e-2)KT_l$ for all $l \in \Z$. We fix such integers $\alpha,c,C \in \Z$ for the rest of the proof.

For simplicity of notation, we first consider running Exp4.R$\left( \delta, T_l, \pmin_l, \I_l \right)$ in each epoch $l$ of Algorithm~\ref{alg:expert_selection_improved} for any $\delta \in (0, 1/L]$ and then apply a change of variables at the end of the proof. We suppose that a uniform expert is available in each epoch. Assumption~\ref{assumption:regime_uniform_expert} is then satisfied for all epochs. For now, we assume that event $\e (\delta)$ holds for all epochs, the probability of which will be discussed at the end of the proof. For simplicity of notation, let $i^*_l \defeq i^*(\I_l;\T_l)$ for $l \in [L]$.

Let $U_l \defeq \alpha l+\log_2{(2c/\delta)}$ for $l \in [L]$. Recall that $\T_l$ is the time interval of epoch $l$ where $|\T_l|=T_l$. By Lemma~\ref{lemma:beygelzimer11_exp4p_regret},
\begin{align*}
	\sum_{l=1}^L R_{i^*_l}(\T_l) - \sum_{t=1}^T r_{a(t)}(t)
	&\leq \sum_{l=1}^L 7\sqrt{KT_l \ln \left( \frac{2N_l}{\delta} \right)} \\
	&= \sum_{l=1}^L 7\sqrt{KC2^l \ln \left( \frac{c2^{\alpha l+1}}{\delta} \right)} \\
	&= 7\sqrt{KC\ln{2}} \sum_{l=1}^L \sqrt{2^lU_l} \\
	&\leq 7\sqrt{KCU_L\ln{2}} \sum_{l=1}^L 2^{l/2} \\
	&< 20\sqrt{KCU_L} \left( 2^{L/2} -1 \right) .
\end{align*}
Since $L=\log_2 [1+T/(2C)]$, we have
\begin{equation}
	\begin{aligned}
		&\quad \, \sum_{l=1}^L R_{i^*_l}(\T_l) - \sum_{t=1}^T r_{a(t)}(t) \\
		&< 20\sqrt{KCU_L} \left( \sqrt{1+\frac{T}{2C}} -1 \right) \\
		&< 20\sqrt{K\left[ \alpha L+2\ln \left( \frac{2c}{\delta} \right) \right] \left( C+\frac{T}{2} \right) }.
	\end{aligned}
\label{eq:master_algo_proof_estimation_error_upper_bound_improved}
\end{equation}

We first discuss the case where $i^* \notin \I_1$. Let $L''$ be the last epoch such that $i^*$ is not considered in Algorithm~\ref{alg:expert_selection_improved}. In other words, $L'' \defeq \max \sth{l \in [L]}{i^* \notin \I_l}$. Lemma~\ref{lemma:lower_bound_on_optimal_model} implies that $i^* \in \I_l$ for all $l>L''$. By the definition of $i^*_l$, we have $R_{i^*_l}(\T_l) \geq R_{i^*}(\T_l)$ for all $l>L''$. Thus,
\begin{align*}
	R_{i^*}([T]) - \sum_{l=1}^L R_{i^*_l}(\T_l)
	&\leq \sum_{l=1}^{L''} \left( R_{i^*}(\T_l) - R_{i^*_l}(\T_l) \right) \\
	&\leq \sum_{l=1}^{L''} T_l \\
	&< C2^{L''+1}.
\end{align*}
We now provide an upper bound on $L''$. By Algorithms~\ref{alg:expert_selection_improved} and \ref{alg:search_improved}, we have $| \I_l | = N_l$ and $1 \leq \underline{i}_l \leq \underline{i}_{l+1}$ for all $l$. Let $L'$ be the last epoch such that $i^*$ is not considered in the worst case where $\underline{i}_l = 1$ for all $l$. In other words, $L' \defeq \min \left( L, \ceil{\alpha^{-1}\log_2(i^*/c)} -1 \right)$. Under the assumption that $i^* \notin \I_1$, we get $L' \geq 1$. By the definitions of $L'$ and $L''$, we have $L'' \leq L'$ and hence
\begin{equation}
	R_{i^*}([T]) - \sum_{l=1}^L R_{i^*_l}(\T_l) < C2^{L'+1} < 2C \left( \frac{i^*}{c} \right)^{\frac{1}{\alpha}}.
\label{eq:master_algo_proof_approximation_error_upper_bound_improved}
\end{equation}

We now consider the case where $i^* \in \I_1$. It follows from Lemma~\ref{lemma:lower_bound_on_optimal_model} that $i^* \in \I_l$ for all $l$. Thus, the definition of $i^*_l$ implies that $R_{i^*_l}(\T_l) \geq R_{i^*}(\T_l)$ for all $l$. We define $D \defeq R_{i^*}([T]) - \sum_{l=1}^L R_{i^*_l}(\T_l)$. We then have $D \leq 0$. However, the definition of $i^*$ implies that $D \geq 0$. Therefore, $D=0$ and \eqref{eq:master_algo_proof_approximation_error_upper_bound_improved} is satisfied.

Adding \eqref{eq:master_algo_proof_estimation_error_upper_bound_improved} and \eqref{eq:master_algo_proof_approximation_error_upper_bound_improved} gives
\begin{equation}
\begin{aligned}
	\text{Regret}(T) < &20\sqrt{K\left[ \alpha L+2\ln \left( \frac{2c}{\delta} \right) \right] \left( C+\frac{T}{2} \right) } \\
									&+ 2C \left( \frac{i^*}{c} \right)^{\frac{1}{\alpha}}.
\label{eq:master_algo_proof_regret_before_change_of_variables_improved}
\end{aligned}
\end{equation}
Using Lemma~\ref{lemma:pac_bound_on_total_reward_estimation_error} and the union bound over all $L$ epochs, we conclude that \eqref{eq:master_algo_proof_regret_before_change_of_variables_improved} holds with probability at least $1-L\delta$. A change of variables gives that, for any $\delta \in (0, 1]$, with probability at least $1-\delta$, we have
\begin{align*}
	\text{Regret}(T)
	< &20\sqrt{K\left[ \alpha L+2\ln \left( \frac{2cL}{\delta} \right) \right] \left( C+\frac{T}{2} \right) } \\
		&+ 2C \left( \frac{i^*}{c} \right)^{\frac{1}{\alpha}} \\
	< &20\sqrt{\alpha K(T+2C) \ln \left(\frac{cL(2+T/C)}{\delta} \right) } \\
		&+ 2C \left( \frac{i^*}{c} \right)^{\frac{1}{\alpha}}.
\end{align*}
\end{proof}


\bibliographystyle{ieeetr}
\bibliography{InfiniteExperts}

\end{document}